\documentclass[11pt]{article}
\usepackage[utf8]{inputenc}
\usepackage[margin=1in]{geometry}

\usepackage{amsmath,amsthm,amsfonts,amssymb,mathdots,array,mathrsfs,bm,bbm,stmaryrd,graphicx,subfigure,xcolor}
\usepackage{microtype}
\usepackage{graphicx}
\usepackage{subfigure}
\usepackage{booktabs} 
\usepackage{natbib}
\setcitestyle{open={(},close={)}}
\usepackage[colorlinks=true,citecolor=blue]{hyperref}

\usepackage{multicol}

\usepackage[ruled,vlined]{algorithm2e}

\usepackage{amsmath}
\usepackage{amsfonts}
\usepackage{amssymb}
\usepackage{mathbbol}
\usepackage{graphicx}
\usepackage{fullpage}
\usepackage{url}
\usepackage{csquotes}
\usepackage[american]{babel}
\usepackage{comment}
\usepackage{multirow}
\usepackage{url}
\usepackage{times,subfigure}

\newtheorem{theorem}{Theorem}
\newtheorem{lemma}[theorem]{Lemma}

\newtheorem{corollary}{Corollary}

\newtheorem{problem}{Problem Definition}

\usepackage{xcolor}

\setlength{\textfloatsep}{8pt}

\begin{document}

\title{\huge Planning and Formulations in Pursuit-Evasion:  Keep-away Games and Their Strategies}

\author{\vspace{0.5in}\\\textbf{Weifu Wang} \ and  \ \textbf{Ping Li} \\\\
Cognitive Computing Lab\\
Baidu Research\\
10900 NE 8th St. Bellevue, WA 98004, USA\\\\
  \texttt{\{harrison.wfw,  pingli98\}@gmail.com}
}

\date{\vspace{0.5in}}
\maketitle

\begin{abstract}\vspace{0.3in}

\noindent We study a pursuit-evasion problem which can be viewed as an extension of the {\em keep-away} game. In the game, pursuer(s) will attempt to intersect or catch the evader, while the evader can visit a fixed set of locations, which we denote as the {\em anchors}. These anchors may or may not be stationary. When the velocity of the pursuers is limited and considered low compared to the evaders, we are interested in whether a winning strategy exists for the pursuers or the evaders, or the game will draw. When the anchors are stationary, we show an algorithm that can help answer the above question. \\

\noindent The primary motivation for this study is to explore the boundaries between kinematic and dynamic constraints. In particular, whether the solution of the kinematic problem can be used to speed up the search for the problems with dynamic constraints and how to discretize the problem to utilize such relations best. In this work, we show that a geometric branch-and-bound type of approach can be used to solve the stationary anchor problem, and the approach and the solution can be extended to solve the dynamic problem where the pursuers have dynamic constraints, including velocity and acceleration bounds. 
\end{abstract}

\newpage
\section{Introduction}

We study a pursuit-evasion game where the decision space for the pursuer is continuous, while the decision space for the evaders is a discrete set. We focus on the problem, which can be viewed as an extension of the {\em keep-away} game, where the pursuers (interceptors) attempt to intercept a {\em ball} (evader) moving among a set of sites. Many children and footballers play variations of this game to sharpen their reflexes and speed. We study this problem to discover winning strategies involved in such games and whether the strategies can be used to solve more complex variations of the problem. 

In the proposed game and many similar ones, local perturbations of controls or trajectories may not affect the outcomes except in some finite regions. In other words, we are trying to identify the equivalence of control synthesis for such pursuit-evasion games by analyzing the limit and the discrete versions of the problem. In a discrete setting, which is the main focus of this work, we use geometry and search algorithms to find the winning strategies. In future work, we will study whether such solutions can infer decision regions and control boundaries for complex problems.

The fundamental motivation of this study is to understand the differences between decisions and controls. In non-chaotic systems, controls often can variate by a small amount without changing the outcome, and the system would be controllable within such bounded regions. We refer to the non-similar behaviors across the boundaries of the regions that would lead to different outcomes as different {\em decisions}. 

It is often difficult to directly find the optimal controls for most systems. Nevertheless, if the decision boundaries can be identified, we only need to search and optimize within the same decision structure and compare different decision structures to find the correct outcome of the problem. We use the proposed game as an example to study when the topology matters in the game's outcome and geometry.\\

The proposed game can be briefly described as follows:
\begin{itemize}
\item There are two players, $E$ (evader) and $P$ (pursuer).
\item At every time $E$ can be either at one of finitely many states $s\in S$, or in transition between two states. Whenever $E$ is at a state $s$ it has to choose to transit to the another state $s'$, and the transition time is $T_{s, s'}$
\item $P$ travels on a metric space $X$ with bounded velocity $v$.
\item For any pair of states $s$, $s'$, there is a subset of $X$, $A_{s, s'}$, such that if $P$ is at $A_{s, s'}$ when $E$ is at $s$, then $P$ wins.
\item The strategy of $E$ at time $t$ depends on the position of $P$ on $(0, t)$. The strategy of $P$ at $t$ depends on the strategy of $E$ at $(0, t]$.
\end{itemize}
The pursuers win when they can intercept the evader in a finite time; otherwise, they lose. Even in the highly symmetrical case where the $s$ are placed on the vertices of regular polygons, finding a winning strategy is not trivial and sometimes challenges our intuition.

\newpage

\begin{figure}[t]

    \centering
    \includegraphics[width=5in]{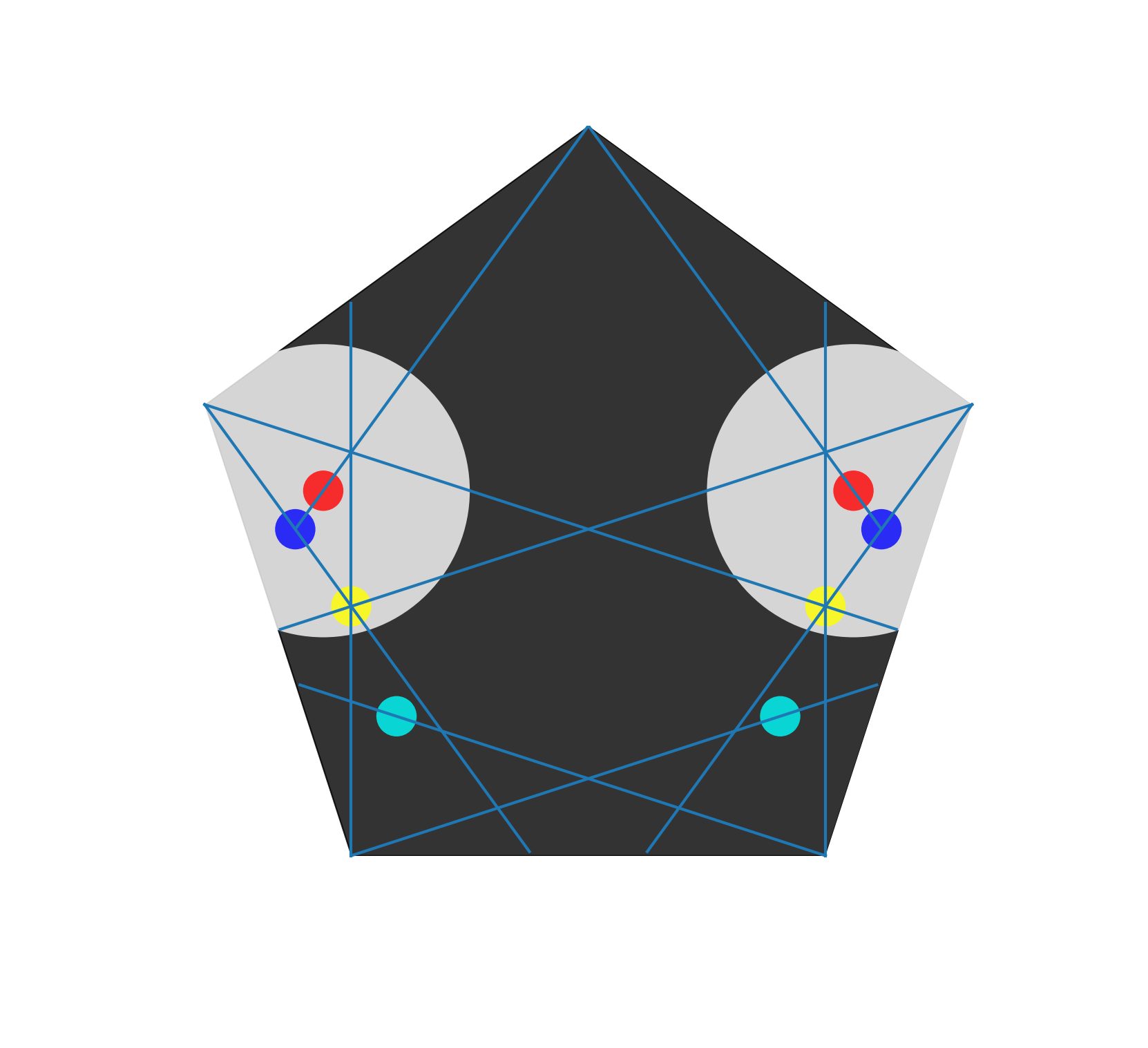}
    
\vspace{-0.6in}

    \caption{Example of winning strategy for two pursuers against five passers. Start from blue to red circles, or from blue to yellow to light blue circles, depending on where the ball is currently located when the pursuers start from the blue circles. }
    \label{fig:five_against_two}\vspace{0.2in}
\end{figure}

For example, in Figure~\ref{fig:five_against_two}, we show the winning strategy for the two pursuers against a single evader $e$, where the $s\in S$ ($|S| = 5$) forms a regular pentagon, and evader velocity $v_e$ is greater or equal to pursuer velocity $v_p$ times $\sin{\frac{\pi}{10}}$. 

The pursuer can start from the blue circles or reach blue circles from arbitrary starting positions. Once they reach the blue circles, the evader cannot travel to the bottom two states from the top three states or vice versa. Based on where the evader is, the pursuer either moves toward the red circles and eventually approaches the state on the top to intercept the evader or moves towards the yellow circles, then to the light-blue circles to intercept the evader. 

Many real-life problems share similar characteristics with the proposed keep-away game. Beyond the sporting strategies, tracking or manhunt can be viewed as intercepting the evaders with known states $S$. Problems like sensor or infrastructure placement and message passing between a set of servers share the game mechanics and the geometric properties of the proposed keep-away game. The proposed game is similar to the homicidal chauffeur game, where the pursuers have more maneuverability. Another real-life scenario can also be viewed as an extension of the proposed game: satellite communication, coverage reconfiguration, and message interception. The main extension would be that the state set $S$ is no longer in Euclidean space but on a manifold. We will study these variations in future work. 

\section{Pursuit and Evasion Games}

Pursuit and evasion games have attracted attention from the math and robotics community for decades. One of the first pursuit and evasion games was the {\em lion and man} game: a lion and a man (each viewed as a single point) in a closed disc have equal maximum speeds; can the lion catch the man? The game is almost 100 years old, and the solution is not apparent. The winning side and the winning strategy spurred debates for many years. The {\em lion and man} game has a unique character: discretization changes the game's outcome. 

If both the lion and the man move continuously in time and no bound exists for the acceleration or curvature of the path, then it was shown that the man could escape capture indefinitely, by Mathematician Abram Samoilovitch Besicovitch, as reported later in a book by~\citet{littlewood1986littlewood}. Researchers suspected the lion could be the winner by staying on the same radius vector as the man at maximum speed. This strategy only works when the man stays on a fixed radius circle, such as the boundary of the disc. Besicovitch showed that it was wrong to assume the optimal strategy for the man is to stay on the boundary of the disc but run along the direction perpendicular to the radius vector. If we discretize the game, for example, let the game plays in turns, then the game's outcome can be changed. When the environment is bounded, the lion can always catch man~\citep{bollobas2012lion}. If we discretize the locations inside the disc, the lion wins. 

There are many variations of the pursuit-evasion games either in the Lion-and-Man framework~\citep{croft1964lion,alonso1992lion,sgall2001solution, karnadI2009lion, bollobas2012lion, abrahamsen2017best,   barmak2018lion, garciac2020pride} or some altered models, such as the Homicidal Chauffeur game~\citep{isaacs1999differential}, Angle game~\citep{conway1996angel}, Princess and monster game~\citep{isaacs1999differential}. Further, \citet{isaacs1999differential} presented a differential form of the pursuit-evasion game as a simplification of the missile guidance and interception problem. Similar models were also used to track down tagged wildlife for information gathering and biological studies. Recently,~\citet{garciac2020pride} studied the game of defender against spies as a differential game.

There are further variations of pursuit-evasion games in the robotics community. For example, people have studied the pursuit-evasion games with visibility constraints~\citep{guibas1997visibility,gerkey2006visibility,bhattacharya2010existence, li2018search}, within challenging environments~\citep{alexander2006pursuit}, with dynamic constraints~\citep{weintraub2020introduction}, or as a model for multi-agent planning~\citep{discenza1981optimal,shoham2008multiagent}.

We present a zero-sum pursuit-evasion game, attempting to study the difference between discrete and continuous decisions. The game is primarily an extension of the keep-away game often practiced in football and other sports. We show that the game is not trivial, even simplified, and the optimal strategies for two sides are closely related to placement and path planning.

\newpage

\section{The Keep-away Game}

\begin{problem}
Let us consider the following game. Pick $n$ disjoint points $p_i\in\mathbb{R}^2$, each denote a state $s\in S$. Then we allow $m$ pursuers to move freely in $\mathbb{R}^2$ starting at some given initial positions, with the only constraint being velocity no larger than $v_p$. The evader must travel between states $s$ with velocity $v_e$. The pursuers win if and only if they can intercept the evader at a finite time.
\label{prob:basic}
\end{problem}

In this work, we study the following questions:
\begin{itemize}
    \item What are the values of $m$ and $n$ such that the pursuers always has a winning strategy?
    \item Given $m$ and $n$, what is the threshold for $v_p$ and $v_e$ such that either side always has a winning strategy?
    \item How to determine if there is a winning strategy and what is the strategy?
\end{itemize}

The game is zero-sum, with heterogeneous agents, like the homicidal chauffeur game~\citep{isaacs1999differential}. In addition to maneuverabilities, the decision spaces for the two sides are also different. There have been studies on the keep-away game, but most use search or learning methods to find good strategies~\citep{gerkey2006visibility,yuan2007rational,  gao2012argumentation, zhao2017local}.

The game proposed above is a simple version of what we would like to study in more depth in the future. Currently, the game involves only kinematic constraints and no dynamic constraints. This study means to gain insights into the more complex problem: the states $s$ are not stationary, and the pursuers have acceleration bounds. We hope the proposed kinematic game strategies can assist the analysis of the differential game when the players have acceleration bounds. 

\vspace{0.1in}

\noindent\textbf{Complexity.} We first show that the proposed problem is equivalent to a turn-based game, where one side has finitely many options, and the other has a continuous family of options:

\vspace{0.1in}
\begin{theorem}
Given the problem defined in Problem Definition~\ref{prob:basic}, the game has the same outcome if the pursuers and evaders make decisions simultaneously or in turns.
\end{theorem}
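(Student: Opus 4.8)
The plan is to pin down what actually differs between the simultaneous and the turn-based protocols and to show that the difference is confined to a negligible set of time instants. In the paper's formulation the evader always conditions its action at $t$ on $P$'s strict past $(0,t)$, while the pursuer conditions on $E$'s history over $(0,t]$; the simultaneous version is exactly the same game except that $P$ too sees only $(0,t)$. Since $E$'s information set is identical in both protocols, it suffices to prove that giving $P$ the extra present-instant information about $E$'s decision cannot change the outcome. First I would record the structural fact that $E$ makes genuine decisions only at the discrete instants when it arrives at a state: because $S$ is a finite set of distinct points, the pairwise distances are bounded below by some $d_{\min}>0$, so consecutive decision times are separated by at least $d_{\min}/v_e>0$ and therefore form a locally finite set $\{t_k\}$. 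On each open interval $(t_k,t_{k+1})$ the evader is committed to a fixed edge, so its action is determined and the contents of $(0,t)$ and $(0,t]$ coincide; the two information structures can differ only at the isolated instants $t_k$.

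The core of the argument is that this difference is immaterial because the pursuer's motion integrates a speed-bounded velocity. Given any turn-based strategy $\sigma_P$, I would define a simultaneous strategy $\sigma_P'$ that agrees with $\sigma_P$ for every $t\notin\{t_k\}$ and is set by the left limit on $\{t_k\}$. Since $\{t_k\}$ has Lebesgue measure zero and $x_P(t)=x_P(0)+\int_0^t u(\tau)\,d\tau$ with $|u|\le v_p$, the trajectories produced by $\sigma_P$ and $\sigma_P'$ are identical: an informational advantage confined to a single instant is worthless to a speed-limited pursuer, which cannot cover a positive distance in zero time. Conversely, every simultaneous strategy is already available in the turn-based game, since turn-based $P$ has weakly more information ($(0,t)\subset(0,t]$). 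Thus the two strategy classes generate the same achievable pursuer trajectories against any fixed evader behavior.

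To convert equal trajectory classes into equal outcomes I would run an induction over the rounds $[t_k,t_{k+1}]$, using that the evader observes only $P$'s strict past. Assuming the two games produce an identical play up to $t_k$, the evader's choice at $t_k$ depends only on $x_P$ on $(0,t_k)$, which is identical, so $E$ makes the same choice and $t_{k+1}$ is the same in both games; then $\sigma_P$ and $\sigma_P'$ prescribe the same velocity almost everywhere on $(t_k,t_{k+1})$, extending the identical play. Hence the full plays coincide, and since interception---membership of $x_P$ in the relevant $A_{s,s'}$ at the matching configuration---is a property evaluated on the play, $P$ intercepts in finite time in one game if and only if it does in the other. The corresponding statement for $E$ follows because its information is untouched, so the win/lose/draw outcome is the same under both protocols.

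The step I expect to be the main obstacle is controlling the \emph{infinite-horizon} behavior: a priori one might worry that an infinitesimal per-round informational gap accumulates over unboundedly many rounds into a genuine advantage, which would threaten the equivalence precisely for plays that never terminate (draws or evader wins). I would resolve this with the two facts established above---that $\{t_k\}$ is locally finite, so every bounded horizon contains only finitely many decision instants, and that $x_P$ is an absolutely continuous function insensitive to velocity values on a null set. Together these make the induction well-founded on every finite prefix and force the coincidence of plays to hold for \emph{all} $t$, not merely on compact subintervals, so no accumulation can occur and the two outcomes must agree.
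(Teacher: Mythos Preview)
Your argument is correct, but it proceeds along a different line from the paper's own proof. The paper argues geometrically via reachable sets: because the pursuers have no acceleration bound, in any interval $\Delta t$ a pursuer can reach every point in the disk of radius $v_p\,\Delta t$ around its current position, so whether a given pass can be intercepted depends only on the pursuer's \emph{location} when the pass begins, not on any direction it may have pre-committed to. The paper then observes that this interception region is the same object in the turn-based and synchronized protocols (up to an $\epsilon$ observation delay that shrinks the radius by $v_p\epsilon$), and concludes that the available responses---hence the optimal strategies---coincide. Your route is instead measure-theoretic: you isolate the discrete set $\{t_k\}$ of genuine decision instants, note that the two information structures agree off this null set, and use absolute continuity of $x_P$ to show that altering the pursuer's control on $\{t_k\}$ leaves the trajectory unchanged; an induction over rounds then pins the plays together for all time.

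Both arguments ultimately exploit the same structural fact---bounded speed with no curvature or acceleration constraint---but cash it out differently. The paper's reachable-set phrasing is closer to the rest of its development (formations, interception cones) and gives immediate geometric intuition, though its handling of the $\epsilon$-delay case is informal. Your formulation is tighter: it makes explicit why a single-instant informational edge is worthless to a Lipschitz mover, it treats the infinite-horizon issue directly via local finiteness of $\{t_k\}$, and it would transfer to settings where the reachable set is not a nice disk. Either approach suffices here.
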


\begin{proof}
Let $\pi_d^P$ be the optimal strategy for turn-based play for the pursuers, and $\pi_s^P$ be the strategy for the synchronized play for the pursuers. Similarly, let $\pi_d^E$ be the optimal strategy for turn-based play for the evaders, and $\pi_s^E$ be the strategy for the synchronized play.

If $\pi_d^P \neq \pi_s^P$, then there exist a pursuers $p_i$ and a placement $C = \{c_{\cdot}\}$, so that the passing choice is different for $p_i$ if game is played in turn or in a synchronized fashion. There is no acceleration bound for pursuers, and given any interval $\Delta t$, a pursuer can reach any point in a circle of radius $v_p\cdot\Delta t$. Given the same pursuer placements, whether the pursuers can intercept the ball, i.e., the interception region, is fixed. If there is no observation delay, the interception region is the union of the circles with radius $v_c\cdot\Delta t$, where $\Delta t$ is the time needed for the evader to reach the next state $s\in S$. If there is an observation delay of $\epsilon$, the interception region is the union of the circles with radius $v_c\cdot(\Delta t - \epsilon)$. Therefore, the strategy of a particular passer is independent of the gameplay fashion, $\pi_d^P$ and $\pi_s^P$ are the same.

When we consider the pursuers, as the trajectory for the ball is a straight line, the condition for interception is entirely determined by the reachable set of the pursuers, whether there is an observation delay or not. We can conclude that the strategy for the pursuers would be the same, and the same argument holds for the evaders. 
\end{proof}

\section{Analysis of Winning Strategies}

We first show some game situations where both sides' winning strategies are straightforward.

\subsection{The case of $m\geq n-1$}

\begin{theorem}
When $m\geq n-1$, the pursuers always has a winning strategy.
\end{theorem}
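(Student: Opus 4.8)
The plan is to exhibit an explicit \emph{blocking} strategy rather than reason about speeds, since the hypothesis $m \ge n-1$ says nothing about $v_p$ versus $v_e$; the claim should therefore hold for \emph{any} velocities, and a constructive occupation argument is the natural route. The key structural fact I will exploit is the game rule that the evader can never rest: whenever it sits at a state it must commit to a transition to a \emph{different} state.

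First I would assign the $n-1$ pursuers injectively to $n-1$ of the $n$ states, say $p_2,\dots,p_n$, leaving exactly one state $p_1$ unoccupied, and instruct each pursuer to travel in a straight line from its initial position to its assigned state. Since every pursuer has a fixed positive speed bound $v_p$ and all pairwise distances in $\mathbb{R}^2$ are finite, there is a finite time $t_0 = \max_i \mathrm{dist}(\text{start}_i, p_{\sigma(i)})/v_p$ after which every one of $p_2,\dots,p_n$ is physically occupied by a pursuer sitting on it. Interception occurring during this transit phase $[0,t_0]$ only helps the pursuers, so I may assume the evader survives to time $t_0$.

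Second I would argue \emph{confinement}. Once a pursuer sits on a state, any arrival of the evader at that state coincides with the pursuer in both space and time and is an interception. After $t_0$ the only unguarded state is $p_1$, so every transition the evader makes terminates at one of the occupied states $p_2,\dots,p_n$: the evader cannot stay at $p_1$, and by the rules it cannot transit from $p_1$ to $p_1$. Since each transition completes in finite time $T_{s,s'}$, the evader cannot stall indefinitely inside a transition either. Hence within finite additional time it is forced to arrive at an occupied state and is intercepted, which establishes a winning strategy for the pursuers.

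The step to get right is this confinement argument: I must rule out the evader avoiding all guarded states \emph{forever}, and the two facts that close this gap are (i) the forced-transit rule, which forbids waiting at the single safe state $p_1$, and (ii) the finiteness of each $T_{s,s'}$, which forbids stalling mid-transition. I would also remark that the whole argument is independent of the velocity ratio, which both explains why the theorem needs no hypothesis on $v_p, v_e$ and suggests that $m = n-1$ is the tight threshold for such a velocity-free guarantee.
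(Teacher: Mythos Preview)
Your proof is correct and follows essentially the same guarding argument as the paper: assign $n-1$ pursuers to $n-1$ states, let them travel there, and then use the forced-transit rule to conclude the evader must land on an occupied state in finite time. The paper's version is terser and in its final line has the pursuers advance toward the one unguarded state rather than simply wait, but the core occupation idea---and your observation that the argument is independent of the velocity ratio---matches.
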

\begin{proof}
In this case, $n-1$ pursuers will be enough to play {\em guards}, i.e., each approach and stay at the location of one of the state $s$ the evader must visit. When the pursuers all arrive at a unique state $s$, if the evader is at one of those $n-1$ states, then pursuers have won. If not, pursuers can move towards the remaining state and intercept the evader wherever it moves next.
\end{proof}

\subsection{A case where the passer always wins}

To describe the situation, we need the next lemma:

\vspace{0.1in}
\begin{lemma}
If the evader moves velocity $v_e$ from a location $p$ to another state $s$ very far away, the pursuer can intercept the evader if and only if the direction from $p$ to the location of the pursuer and the direction to $s$ has an angle $\theta$ with $\sin(\theta)\leq v_p/v_e$.
\end{lemma}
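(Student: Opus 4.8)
The plan is to reduce the interception question to the solvability of a single scalar inequality in the interception time $t$, and then read off the threshold from its discriminant. First I would fix coordinates that exploit the ``very far away'' hypothesis: place $p$ at the origin and let $\hat u$ be the unit vector pointing from $p$ toward $s$, so the evader's trajectory is the ray $e(t) = v_e t\,\hat u$ for $t \ge 0$ (since $s$ is arbitrarily distant, the evader stays in transit for an arbitrarily long time, so no upper bound on $t$ is active). Writing the pursuer's start as $q$ with $|q| = d$ and angle $\theta$ between $q$ and $\hat u$, i.e. $q = d(\cos\theta,\sin\theta)$, the pursuer has a velocity bound but no acceleration constraint, so by time $t$ it reaches exactly the closed disc of radius $v_p t$ about $q$. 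Hence interception is possible precisely when there is some $t > 0$ with $|e(t) - q| \le v_p t$.

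Next I would square this inequality and collect terms. The cross term produces $\cos\theta$ and the constant term collapses via $\cos^2\theta + \sin^2\theta = 1$, leaving the quadratic
\[
(v_e^2 - v_p^2)\,t^2 - 2 v_e d\cos\theta\,t + d^2 \le 0 .
\]
In the relevant regime $v_e > v_p$ the leading coefficient is positive, so this inequality admits a solution if and only if the quadratic has real roots, i.e. if and only if its discriminant is nonnegative. Computing the discriminant and factoring gives $4d^2\bigl(v_p^2 - v_e^2\sin^2\theta\bigr) \ge 0$, which is exactly $\sin\theta \le v_p/v_e$; equality corresponds to the grazing (double-root) interception, consistent with the non-strict inequality in the statement. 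When $\sin\theta > v_p/v_e$ the quadratic is strictly positive for all $t$, so $|e(t)-q| > v_p t$ always and no interception exists, closing the ``only if'' direction.

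The step I expect to require the most care is verifying that the admissible root is actually \emph{positive} — that the interception happens in the future rather than behind the evader's starting point. By Vieta's formulas the product of the two roots equals $d^2/(v_e^2 - v_p^2) > 0$, so the roots share a sign, while their sum is proportional to $\cos\theta$; thus a genuine forward interception also requires $\theta$ to lie in the forward cone ($\cos\theta \ge 0$), and I would either state this orientation convention explicitly or note it is forced by the configuration in which the pursuer lies ahead of the evader. As an independent check and a more geometric route to the same bound, I would apply the law of sines to the triangle $p\,q\,I$, where $I$ is the interception point on the ray: at the boundary $|pI| = v_e t$ and $|qI| = v_p t$, so $\tfrac{v_p}{\sin\theta} = \tfrac{v_e}{\sin(\angle q)}$ forces $\sin(\angle q) = (v_e/v_p)\sin\theta \le 1$, again recovering $\sin\theta \le v_p/v_e$ and identifying the critical angle with the tangency of the pursuer's reachable disc to the evader's ray.
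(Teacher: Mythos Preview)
Your proposal is correct. The paper's own proof consists solely of the law-of-sines argument you offer as an ``independent check'' at the end: it forms the triangle $PQR$ with $P$ the evader's start, $R$ the pursuer's start, and $Q$ the interception point, observes that $|RQ|/|PQ|\le v_p/v_e$, and applies the law of sines to bound $\sin\theta$ by that ratio. Your primary route---reducing interception to the solvability of a quadratic in $t$ and reading the threshold off its discriminant---is a genuinely different, more algebraic decomposition. What it buys you is completeness: you make both directions of the biconditional explicit, and you surface the side condition $\cos\theta\ge 0$ (positivity of the interception time), which the paper's one-paragraph argument leaves implicit. The paper's approach, by contrast, is terser and coordinate-free, but as written it really only establishes the ``only if'' direction.
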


\begin{proof}
Suppose the pursuers, originally at point $R$, catches the ball at point $Q$, then in the triangle, $\Delta PQR$ we have the length of $RQ$ divided by the length of $PQ$ being no more than $v_p/v_e$, and this is the sine of the angle at $P$ divided by the sine of the angle at $R$, the latter is bounded by $1$.
\end{proof}

\vspace{0.1in}

Now, define a pass that can be blocked if a pursuer's direction to $p$ is no more than $\arcsin(v_p/v_e)$ to this direction. We have:

\vspace{0.1in}

\begin{theorem}\label{blocked}
 The evader has a winning strategy when the number of the pursuers is small and their velocities are very low so that they cannot block all moving directions of the evader.
\end{theorem}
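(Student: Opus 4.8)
The plan is to exhibit an explicit evader strategy—at each state, depart along a pass direction that no pursuer can currently block—and to show that under the stated hypotheses such a direction always exists, so that the evader transits forever and is never intercepted. The first step is to restate the hypothesis as cone geometry using the previous Lemma. With the evader at its current state $s_i$, the candidate passes go to the other $n-1$ states; by the Lemma a pursuer at $R$ can block the pass to $s_j$ if and only if the direction from $s_i$ to $R$ lies within $\alpha := \arcsin(v_p/v_e)$ of the direction from $s_i$ to $s_j$, i.e. $R$ sits inside the infinite cone of half-angle $\alpha$ with apex $s_i$ and axis $\vec{s_i s_j}$. The crucial feature, already implicit in the derivation of the Lemma, is that this test is phrased in terms of the pursuer's position at the instant of decision and already accounts for the pursuer moving optimally at speed $v_p$; hence if $R$ is outside the cone when the evader commits, that pursuer cannot intercept the pass for its entire duration, regardless of how it subsequently moves. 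By the turn-based equivalence established above, the evader may observe the current pursuer positions before choosing, so this instantaneous test is exactly what the evader needs.

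The second step is the pigeonhole/angular-packing argument that makes the informal words ``small'' and ``very low'' precise. A single pursuer contributes, as seen from $s_i$, a single direction $\vec{s_i R}$, and this direction lies within $\alpha$ of at most one destination direction provided the destination directions at $s_i$ are pairwise separated by more than $2\alpha$. Thus, when $v_p/v_e$ is small enough that $2\arcsin(v_p/v_e)$ is below the minimum angular gap $\min_{i}\min_{j\neq k}\angle s_j s_i s_k$ taken over all states, each of the $m$ pursuers blocks at most one of the $n-1$ available passes. Consequently at least $n-1-m$ passes are open, so as long as $m\le n-2$ at least one escape direction remains at every state the evader can reach—this is precisely the meaning of ``cannot block all moving directions.''

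The third step closes the induction. Starting from any state, the evader selects an open pass (guaranteed above) and crosses safely to the next state; upon arrival it faces a configuration to which the identical counting applies, since the bound depends only on $m$, on $v_p/v_e$, and on the fixed geometry of $S$, and not on where the pursuers currently are. Hence the evader is never caught, the pursuers never intercept in finite time, and—the game being zero-sum—the evader wins.

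The step I expect to be the main obstacle is not the steady-state counting but pinning the hypothesis down so the argument is genuinely robust over an infinite horizon: I must rule out the pursuers slowly herding the evader toward a state whose outgoing directions are poorly separated, which is exactly why the gap condition has to be a minimum over \emph{all} states rather than the current one. A secondary subtlety is the near-collinear case—states whose mutual directions are almost aligned force $\alpha$, and hence $v_p/v_e$, to be extremely small—so I would state the velocity threshold in terms of the worst-case angular gap of the placement and remark that for degenerate configurations the ``very low velocity'' requirement can be severe. Finally, I would fix the boundary of the cone test ($\sin\theta = v_p/v_e$, and the irrelevant ``pursuer-behind'' branch $\theta \ge \pi-\alpha$) so that ties are assigned consistently and cannot let a pursuer sneak an interception on a pass the evader believed was open.
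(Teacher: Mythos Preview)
Your proposal is correct and follows the same idea as the paper: the evader's strategy is simply to move, at every step, along a direction that is not currently blocked. The paper's own proof is literally a single sentence to this effect, taking the hypothesis ``cannot block all moving directions'' as given; your version additionally unpacks that hypothesis quantitatively via the angular-gap/pigeonhole count, which the paper defers to the subsequent Corollary in the regular-polygon case.
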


\begin{proof}
The strategy of the evader is to always move in a direction that has not been blocked.
\end{proof}

\vspace{0.1in}

In the later section, with a slight abuse of terminology, we will also call a set of adjacent states $S_k$ being {\em blocked} if all the lanes for the evaders to move towards a state, not in $S_k$ can be {\em intercepted} by a pursuer before the evader reach the desired target state. 

\subsection{The case of circle}

Now consider the case when the states are placed with equidistance around a unit circle. At a given state, the evader can move towards $n-1$ different states, and the angle between adjacent moving directions is $\pi/n$. Hence Theorem~\ref{blocked} now implies:

\vspace{0.1in}

\begin{corollary}
If $v_p/v_e<\sin((\lceil {n-1\over m}\rceil-1){\pi\over 2n})$, the evader has a winning strategy.
\end{corollary}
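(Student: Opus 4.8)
The plan is to reduce the corollary to a counting statement about how many of the evader's discrete move directions a single pursuer can block, and then invoke Theorem~\ref{blocked}. First I would fix the evader at an arbitrary state and record the geometry. Since the $n$ states sit equally spaced on the unit circle, the directions from the current vertex to the other $n-1$ vertices are themselves equally spaced, with consecutive directions separated by exactly $\pi/n$; this follows from the inscribed-angle relation and can be checked directly by writing the chord direction to the $j$-th target as $\tfrac{\pi}{2}+\tfrac{\pi j}{n}$. Thus the $n-1$ admissible move directions form an arithmetic progression of angles with common gap $\pi/n$.

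Next I would quantify a single pursuer's blocking power. By the preceding Lemma, a pass is blockable by a pursuer precisely when the angle between the pass direction and the direction from the evader to that pursuer is at most $\arcsin(v_p/v_e)$; hence one pursuer blocks exactly the move directions lying in an angular window of half-width $\arcsin(v_p/v_e)$, i.e.\ of full width $2\arcsin(v_p/v_e)$. Because the move directions are equally spaced by $\pi/n$, any $k$ of them span an angle of at least $(k-1)\pi/n$, with equality for $k$ consecutive directions; therefore a window can contain $k$ directions only if its full width is at least $(k-1)\pi/n$.

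The core computation is then to set $k=\lceil (n-1)/m\rceil$ and feed in the hypothesis. The assumption $v_p/v_e<\sin\!\big((k-1)\tfrac{\pi}{2n}\big)$ is equivalent, by monotonicity of $\sin$ on $[0,\tfrac{\pi}{2}]$, to $\arcsin(v_p/v_e)<(k-1)\tfrac{\pi}{2n}$, so the full width $2\arcsin(v_p/v_e)$ is strictly less than $(k-1)\pi/n$; by the previous paragraph each pursuer blocks at most $k-1$ of the $n-1$ directions. Since each pursuer's blocked set is a contiguous run of the equally spaced directions, the union over all $m$ pursuers blocks at most $m(k-1)$ directions, and a short case check on $n-1=qm+r$ (with $0\le r<m$) shows $m(k-1)<n-1$ for every choice of $m,n$. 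Hence at every state at least one move direction is unblocked, and by Theorem~\ref{blocked} the evader wins by always moving along such a direction.

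The step I expect to require the most care is the bookkeeping that converts ``window width'' into ``number of discrete directions covered,'' together with the final inequality $m(k-1)<n-1$: one must handle the ceiling correctly in both the divisible and non-divisible cases, and verify that the window truly captures a contiguous block of the equally spaced directions, so that the per-pursuer bound $k-1$ is valid and the union bound $m(k-1)$ applies. A secondary point to state cleanly is that only the necessary direction of the Lemma is used here, namely that an unblocked direction cannot be intercepted; this is what guarantees that the evader's policy of always choosing an unblocked direction keeps it safe indefinitely, which is precisely what winning means in this setting.
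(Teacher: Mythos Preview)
Your proposal is correct and follows essentially the same approach as the paper: the paper simply observes that from any vertex the $n-1$ move directions are separated by $\pi/n$ and then says ``Hence Theorem~\ref{blocked} now implies'' the corollary, leaving the pigeonhole count implicit. You have supplied exactly those missing details---the per-pursuer bound of $\lceil(n-1)/m\rceil-1$ blocked directions and the verification that $m(\lceil(n-1)/m\rceil-1)<n-1$---as well as the observation that only the necessary direction of the interception Lemma is needed, so there is nothing to correct.
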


\vspace{0.1in}
Now we consider the case $m=1$. In this case, there is a complete description of the outcome of the game because of the following Theorem:

\begin{theorem}
When $m = 1$ and $v_p/v_e \geq \sin(\pi/2-\pi/n)$, the pursuers have winning strategy.
\end{theorem}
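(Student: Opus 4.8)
The plan is to give the pursuer an explicit stationary strategy and reduce the theorem to the interception Lemma proved just above. First I would place the single pursuer at the center $O$ of the unit circle and keep it there. The point of this choice is symmetry: for whichever state $P$ the evader currently occupies, the ray from $P$ to the pursuer is exactly the ray $P\to O$, i.e. the diameter direction, which by symmetry bisects the fan of the $n-1$ admissible move directions out of $P$. So a single interception cone, centered on $P\to O$, is optimally aimed no matter where the evader sits.

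Next I would compute the two angular quantities that must be compared. By the inscribed-angle theorem (equivalently, looking at the isosceles triangle $OPs$ with $|OP|=|Os|=1$ and apex angle equal to the central angle $\phi$ between $P$ and $s$), the angle at $P$ between $P\to O$ and $P\to s$ equals $\pi/2-\phi/2$. The widest such angle occurs for the adjacent state, $\phi=2\pi/n$, giving the half-width of the fan as $\pi/2-\pi/n$; every farther target has larger $\phi$, hence a strictly smaller angle at $P$, and therefore sits well inside the fan. On the other side, the preceding Lemma says a pass in a given direction is blockable precisely when the pursuer lies within angle $\arcsin(v_p/v_e)$ of that direction, i.e. inside a cone of half-angle $\arcsin(v_p/v_e)$ about $P\to O$. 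The hypothesis $v_p/v_e\ge \sin(\pi/2-\pi/n)$ is exactly $\arcsin(v_p/v_e)\ge \pi/2-\pi/n$, so the interception cone contains the entire fan. Thus from the center every admissible pass is blockable, and the pursuer intercepts whatever the evader attempts; the binding case is the adjacent pass, which is at threshold, and all other passes are handled with room to spare.

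The hard part will be that the Lemma is stated for a target state ``very far away,'' whereas here every target lies at finite distance on the unit circle. The cone condition $\sin(\angle P)\le v_p/v_e$ guarantees that the optimal interception point exists on the \emph{infinite} ray from $P$, but for a short (adjacent) pass at threshold that point can fall beyond the target $s$; one checks that for $n\ge 5$ the distances force $v_p/v_e\ge 1/(2\sin(\pi/n))>\cos(\pi/n)$ to catch such a pass strictly before it lands, so a purely static center strategy need not capture a single short pass on the segment $Ps$. I therefore expect the genuine work to be closing this finite-distance gap, and I would do it with a progress argument rather than claiming instantaneous capture. Concretely, I would verify directly that every pass with central angle $\phi\ge\pi/2$ is intercepted outright from the center, and argue that any short pass the pursuer cannot yet catch nonetheless lets it advance monotonically from $O$ toward the relevant chord, strictly decreasing a potential measuring its distance to the evader's active region. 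Since the state set is finite and the evader can only avoid a long (interceptable) pass by repeatedly playing short passes, the potential cannot decrease forever; after finitely many short passes the pursuer is close enough that even the adjacent pass is caught on the segment, yielding capture in finite time. Verifying the monotonicity of this potential and that it is bounded below is the step I expect to require the most care.
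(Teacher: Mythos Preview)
Your opening analysis---placing the pursuer at the center, computing the half-angle $\pi/2-\pi/n$ of the fan of passing directions via the isosceles triangle $OPs$, and matching it against the interception cone from the Lemma---is exactly how the paper disposes of $n=3,4$, and you correctly diagnose why the static center fails for $n\ge 5$: the Lemma's ``very far away'' hypothesis breaks for the short adjacent chord, since catching that pass on the segment requires $v_p/v_e\ge 1/(2\sin(\pi/n))>\cos(\pi/n)$.

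Where you diverge from the paper is in the remedy for $n\ge 5$. You propose an abstract potential argument---long passes caught outright from the center, short passes letting the pursuer ``advance monotonically toward the relevant chord''---and you flag the choice of potential and its monotonicity as the step needing care. The paper instead supplies an explicit strategy lifted from the lion-and-man game: after reaching the center, the pursuer maintains the invariant that it, the center, and the evader's current target state are collinear, by running \emph{perpendicular} to that line whenever the evader moves to a state off it (until collinearity with the new target is restored), then stopping and waiting; if the evader stays put, the pursuer simply runs along the line toward it. This makes your hoped-for potential concrete---the pursuer's distance from the center strictly increases at each perpendicular move while it remains on the correct radius---so the pursuer--evader distance is forced to zero, and once it is small enough every lane is blocked. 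Your plan is viable in spirit, but the perpendicular-move/radial-invariant idea is the specific mechanism the paper uses to convert ``some potential should decrease'' into an actual strategy whose convergence follows by elementary trigonometry.
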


\begin{proof}
When $n=3$ or $4$, the pursuer can go to the center of the unit circle, then it should be able to intercept the evader regardless of its moving direction. For $n\geq 5$, the pursuer can use the following strategy, inspired by the lion and man game strategy: first run towards the center. Then, if the evader moves toward a state $s$, run towards $s$; if the evader does nothing, run towards the state where the evader resides. Now the center, state where the evader resides, and pursuer all lie on the same line. Whenever the evader moves towards a state not on the line, the pursuer run perpendicular to this line till the pursuer, center, and target state lies on the same line again, then stops and waits till the evader reaches the new state. Otherwise, run towards the state the evader resides. One can see from trigonometry that this strategy can always exist, and with more and more steps, the distance between pursuer and evader converges to $0$. When the pursuer is close enough to a passer, it can now just block all the moving lanes.
\end{proof}

When there is one pursuer, and $n$ approaches $\infty$, the problem becomes increasingly analogous to the lion and man game. The difference is that the pursuer only needs to get close enough to win this problem. The pursuer can win in finitely many steps when $v_p$ is above the given threshold.

If the number of pursuers is between these two extreme values, and the velocity of the pursuer is not very large, the winning condition for the pursuers becomes non-trivial. We first present the winning conditions for five states and two pursuers in the next section and then present the analysis and the algorithm to find generalized solutions to the proposed~problem.

\section{Five States with Two Pursuers}

Consider the case of five states placed on the vertices of a regular pentagon. Label the states $s_0$ through $s_4$, with $s_0$ on the $y$ axis and the indices increment counter-clockwise. Consider two pursuers with velocity $v_p$ just sufficient to block all moving lanes against a single evader. We can compute the minimum velocity for the two pursuers to block all moving lanes~is~$v_e\sin(\frac{\pi}{10})$. 

Below is a winning strategy for the pursuers. Let the two pursuers reach the blue circles  in Figure~\ref{fig:five_against_two}  regardless of the evader strategy, and wait for the evader to leave the current state and move towards another state.  Considering symmetry, there are three possible states where the evader can start: $s_0$, $s_1$ ($s_4$), $s_2$ ($s_3$).

\vspace{0.1in}

\noindent\textbf{Evader start from $p_0$}: Evader can only goes to $s_1$ or $s_4$, as the blue circles currently blocks the lanes to $s_2$ or $s_3$; evader remains at $s_0$. If the evader moves towards $s_2$ or $s_3$, the pursuers stay at the blue circles. If the evader remains at $s_0$, the pursuers move towards the red circles and follow the line towards $s_0$. If the evader is still at $s_0$ when the pursuers reach the red circles, all lanes are now blocked, and pursuers win. If the evader moves towards $s_1$ or $s_4$, the pursuers return to blue circles.

\vspace{0.1in}

\noindent\textbf{Evader start from $s_1$}: The evader cannot move towards $s_2$ or $s_3$, or $s_4$, as these lanes are blocked. The evader can only move towards $s_0$, and the pursuers win the next round by heading towards the red circles.

\vspace{0.1in}
\noindent\textbf{Evader start from $s_2$}: The evader cannot head towards $s_1$ or $s_4$. If the evader heads towards $s_0$, the pursuers can win in the next round by going towards the red circles. So, the evader can only move between $s_2$ and $s_3$. In this case, the pursuers move first towards the yellow circle, following the line connecting them. While the pursuers move, the evader cannot move towards $s_1$ or $s_4$. Once the pursuers reach the yellow circles, the evader can no longer move toward $s_0$. If the evader is not stationary, the pursuers follow the line and move towards the light blue circle. When the pursuers reach light blue circles, all lanes will be blocked. If the evader remains stationary at $s_2$ or $s_3$, the pursuers must also move towards the light blue circles. Without loss of generality, let the evader be at $s_2$. The left pursuer blocks the moving lane to $s_4$, the right pursuer blocks the lane to $s_0$, and the lane to $s_3$ will lead to a losing situation. The evader can only move towards $s_1$. If the evader goes to $s_1$, the pursuer can return to yellow circles before the evader reaches $s_1$. If the evader reaches $s_1$ and the pursuers are at yellow circles, the evader can only go to $s_0$ or $s_4$. Either will result in a losing condition analyzed above.

This winning strategy requires the pursuers' velocity to be larger than $v_e\sin(\frac{\pi}{10})$ so that each pursuer can cover two moving lanes. Let $v_e$ be $1$. Then the pursuers need to have a velocity of approximately $0.31$. Usually, athletes can kick the ball to have a velocity of around $75$ to $90 km/h$. That means the pursuers need to have a velocity of around $25km/h$. Many professional footballers can achieve that speed, meaning two pursuers with a good strategy can win the keep-away game against five passers (in a kinematic sense). If the passers are allowed to move, the corresponding velocity bound of the pursuers needs to increase. 

\vspace{-0.05in}
\section{Winning Strategies and How to Find Them}
\vspace{-0.05in}

The winning strategies are difficult to find algebraically. The main challenge is the continuous decision space for the pursuers. The pursuers can win if they reach specific locations, but minor variations may lead to alternative results. Unless we can solve optimization or represent the decision boundary analytically, the algebraic representations may lead to incorrect results.

Given $n$ states and $m$ pursuers, we know there are $O(n^2)$ lanes. Since many of the lanes are adjacent geometrically, we could just check whether the $m$ pursuers can arrive at any lane in time to block the evader if the lane is used. If the pursuers can successfully {\em block} all lanes between two disjoint subsets of the $n$ states within the time for the evader to move between any two states, the evader cannot escape one of these subsets. If the pursuers can reduce the size of the blocked subset until it contains only a single state, the pursuers will be able to win. 

We refer to the pursuer locations that can block a subset of states as a {\em formation}. Let the velocity $v_e$ and $v_p$ be given and fixed. We first need to identify the possible coverage regions against subsets of states. As the positions of the states are given, the pursuer formations have fixed geometries. For simplicity, we label the states $s_1$ through $s_n$, where $s_1$ is on the positive $y$ axis, and the indices increments counter-clockwise.

\begin{algorithm}[b!]
    \caption{Formations (i)}\label{alg:formation_i}
    \SetKwInOut{Input}{input}
    \SetKwInOut{Output}{output}
    \Input{$n$, $m$, $i$}
    \Output{Pursuers geometry}
    $V_i\leftarrow p_1, \ldots, p_{i}$\;
    $\mathcal{L}\leftarrow$ Identify all lanes from set $V_i$\;
    sort $\mathcal{L}$ based on lengths\; 
    $F\leftarrow\emptyset$\;
    Compute the coverage region triangle for each $l$\;
    \For{$l\in L$} {
        Find overlapping triangles with $l$ with at least one common lane endpoint\;
        $p\leftarrow$ overlapping region (polygon)\;
        Remove corresponding $l$ from $L$\;
        $F\leftarrow F\cup o$\;
    }
    \Return $F$\;
\end{algorithm}

Given a set of $n$ states and $m$ pursuers, we can use Algorithm~\ref{alg:formation_i} to find the region of possible positions for the pursuers to block a set of $i$ adjacent states. As the evader travels along a straight line, and the pursuers can move in an arbitrary direction without acceleration bound, the region that can block a lane is in a shape of a triangle, or more precisely, a cone whose apex is the state from which the evader starts. The spanning angle of the cone is $2\mathrm{atan}{\frac{v_p}{v_e}}$. In 2D, we can intersect these triangles to find overlapping regions that can block multiple passing lanes successfully. In 3D, the cone intersection is also easily computable. 



Once the formations are found against each $i$ that is no larger than $\lceil n/2\rceil$, we can check if two formations are {\em close enough} so that the pursuers can move from formation again $i$ states to a formation against $(i-1)$ states. If the answer is yes until $i = 2$, i.e., can successfully reduce the size of the blocking subset from $\lceil n/2\rceil$ to $1$, the pursuers can win. Pursuers may lose if the answer is no to any adjacent formation transitions. We can find the shortest distance $ d = \min\max d_{a, b}$ between arbitrary two polygons or polyhedrons ($p_a\in \mathcal{F}_i$ and $p_b\in\mathcal{F}_j$) given two formations $\mathcal{F}_i$ and $\mathcal{F}_j$. 

Directly finding $\min\max d_{a, b}$ is equivalent to a one-to-one assignment problem and is NP-hard. Greedy algorithms can be used to find potential solutions to the problem. In the proposed problem, as the pursuers are all bounded within a convex region, we only need to find whether the assignment can allow a successful transition with a given velocity. We can use Algorithm~\ref{alg:formation_dis} to test if the condition is satisfied. Instead of the complete assignment, we only need to find a valid assignment with a maximum distance below the allowed value. Therefore, whether the assignment is optimal is not critical. In Algorithm~\ref{alg:formation_dis}, we use a greedy approach to find if the constraint can be satisfied. \vspace{0.3in}

\begin{algorithm}[h]
    \caption{FormationsTrasition ($F_i$, $F_j$)}\label{alg:formation_dis}
    \SetKwInOut{Input}{input}
    \SetKwInOut{Output}{output}
    \Input{$F_i$, $F_j$}
    \Output{$d\in \mathbb{R}$}
    \For{$p_a\in F_i, p_b\in F_j$} {
        $d_{a, b}\leftarrow \|p_a - p_b\|_2$\;
    }
    $t\leftarrow$ Minimum traveling time from set of $i$ states to set of $j$ states\;
    $l\leftarrow t\cdot v_c$\;
    \For{each $p_a\in F_i$} {
        $D_a\leftarrow$ sort $d_{a, k}$ $\forall p_k\in F_j$\;
        Remove $d_{a, k}$ if $d_{a, k} > l$\;
    }
    \While{exist $p_a$ not assigned and $D_a\neq\emptyset$} {
      Find the $p_a$ with smallest $|D_a|$\;
      Pick smallest $d_{a, k}\in D_a$\;
      Remove $D_a$ and all $d_{\cdot, k}$ in other $D_{\cdot}$\;
    }
    \uIf {All $p_a$ are assigned} {
      \Return true\;
    }\Else {
      \Return false\;
    }
\end{algorithm}

\newpage

The overall process can be written as Algorithm~\ref{alg:interceptor_win}. 
\begin{algorithm}[t]
    \caption{Pursuer winning strategy}\label{alg:interceptor_win}
    \SetKwInOut{Input}{input}
    \SetKwInOut{Output}{output}
    \Input{$m$, $n$}
    \Output{Whether the pursuers can win}
    $i\leftarrow \lceil\frac{n}{2}\rceil$\;
    \While{$i > 0$} {
        $\mathcal{F}_i\leftarrow Formations(i)$\;
        $\mathcal{F}_{i-1}\leftarrow Formations(i-1)$\;
        $C \leftarrow\mathrm{FormationsDistances}(\mathcal{F}_i, \mathcal{F}_j)$\;
        \uIf{$C$ is true} {
            continue\;
        }\Else{
            \Return false\;
        }
    }
    \Return true\;
\end{algorithm}

The above analysis holds even when the states are not placed uniformly on a circle. The only difference is that the pursuer formations and state placements are no longer symmetric. The fundamental condition for winning the game is the same. The main challenge comes when the pursuers have acceleration bound, and the coverage regions are no longer circles.

Figures~\ref{fig:five-vs-2} to~\ref{fig:seven-nonsym-vs-3} show the evader moving lanes (red dotted), the pursuer formations against different number of states (light blue region), and the transitions among formations. Even when the states are not placed on a convex region, the same algorithm applies, as shown in Figure~\ref{fig:seven-nonsym-vs-3}. In the shown examples, the velocities for the pursuers are larger than $v_e\sin(\frac{\pi}{2n})$, so the coverage region against a single passer is not a single line~but~a~polygon. 

In Figure~\ref{fig:five_against_two}, we can see that the transition formations overlap. Therefore, it is always possible for the pursuers to change formations and slowly reduce the subset of states the pursuers can block. In Figures~\ref{fig:six-vs-3}, ~\ref{fig:seven-vs-3}, and~\ref{fig:seven_transitions_merge}, the formation regions have gaps, meaning in order for pursuers to win, they need to be able to finish the formation transition within a pass. In these two examples, the pursuers have large enough velocity to make the formation transition feasible within traveling time for the evader. Thus, the pursuers can win in these scenarios. However, if we reduce the pursuers' velocity to be close to $v_e\sin(\frac{\pi}{2n})$, the gap between the formations increases. We have the next Lemma.

\begin{figure*}[h]
    \centering
    \subfigure[Formation against $3$ states. ]{
    \includegraphics[width=1.55in]{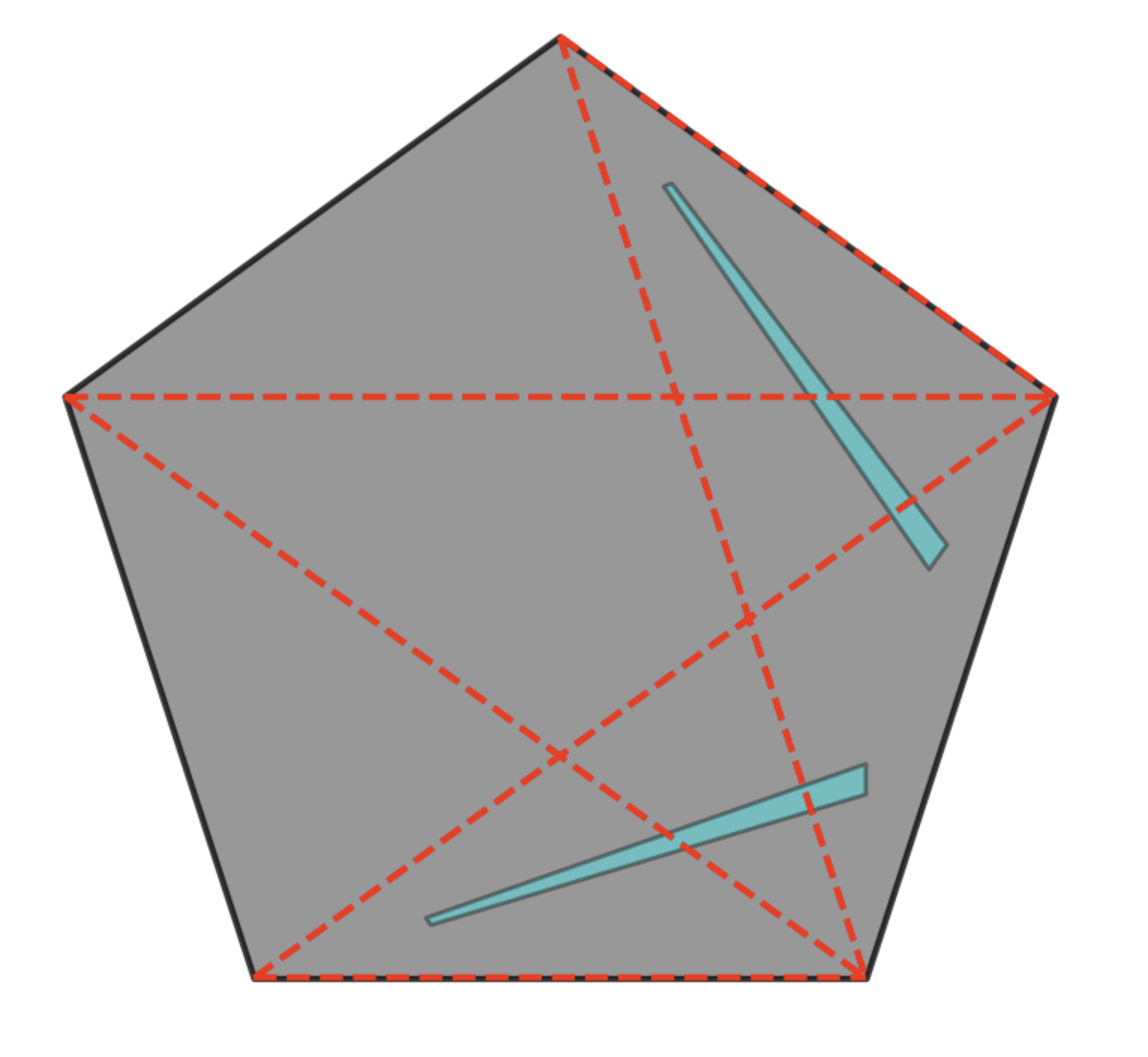}
    \label{fig:five-2-vs-3}
    }
    \hspace{-0.1in}
    \subfigure[Formation against $2$ states. ]{
    \includegraphics[width=1.55in]{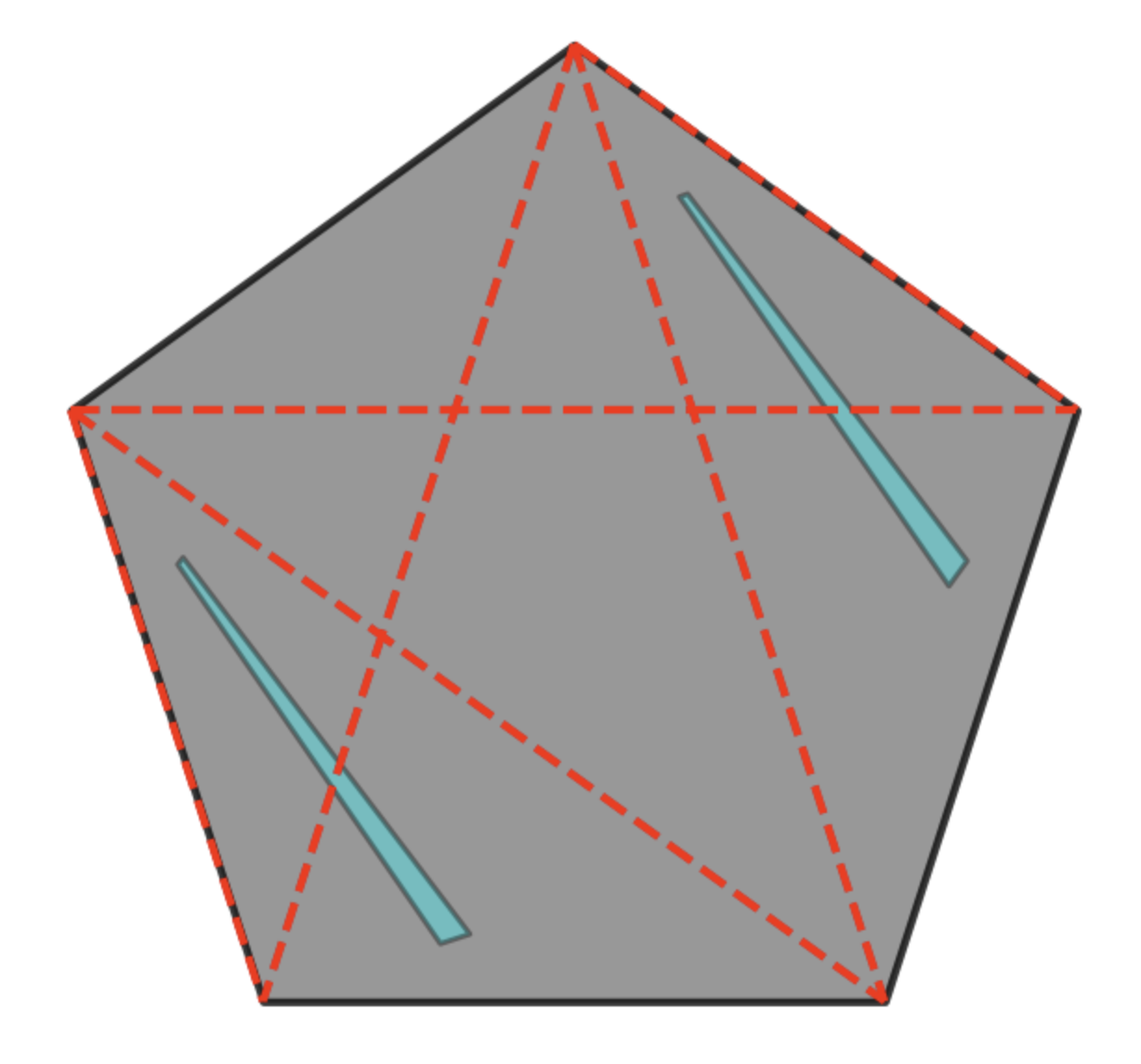}}
    \hspace{-0.1in}
    \subfigure[Formation against $1$ state. ]{
    \includegraphics[width=1.55in]{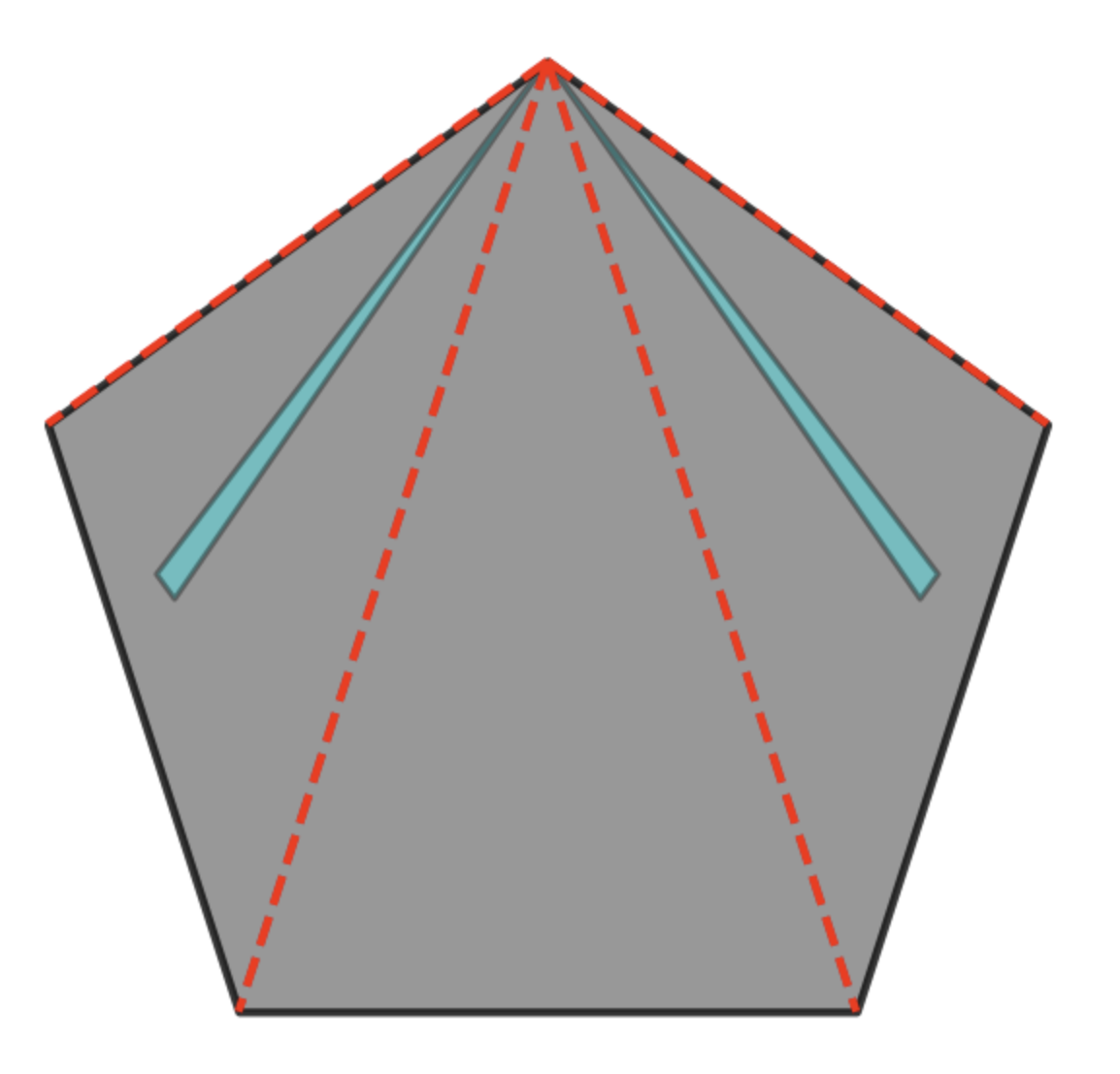}}
    \hspace{-0.1in}
    \subfigure[Formation transitions.]{
    \includegraphics[width=1.55in]{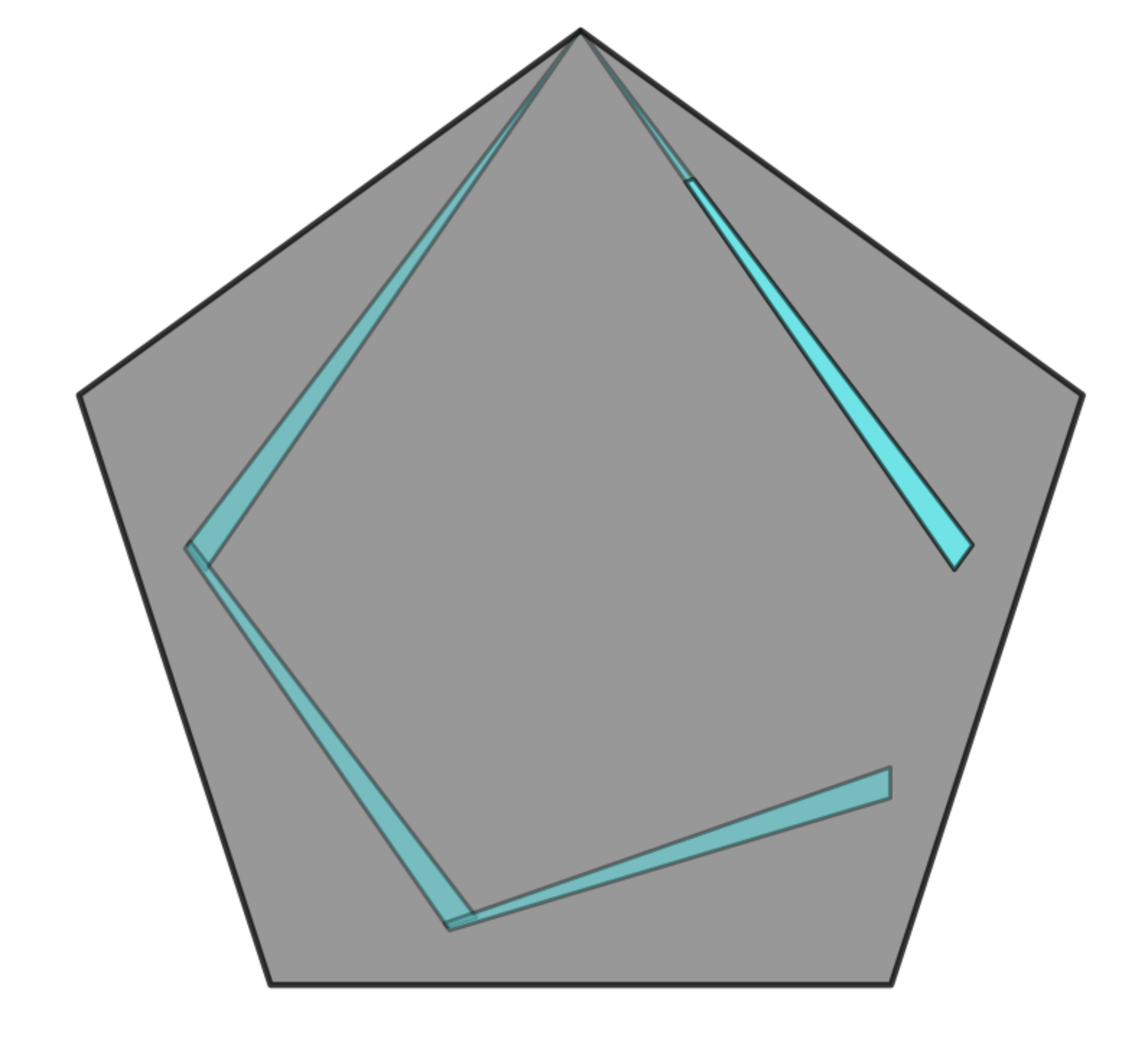}}
    \caption{With five states and two pursuers, different formations and transitions among formations.}
    \label{fig:five-vs-2}
\end{figure*}

\begin{figure*}
    \centering
    \subfigure[Formation against $3$ states. ]{
    \includegraphics[width=1.55in]{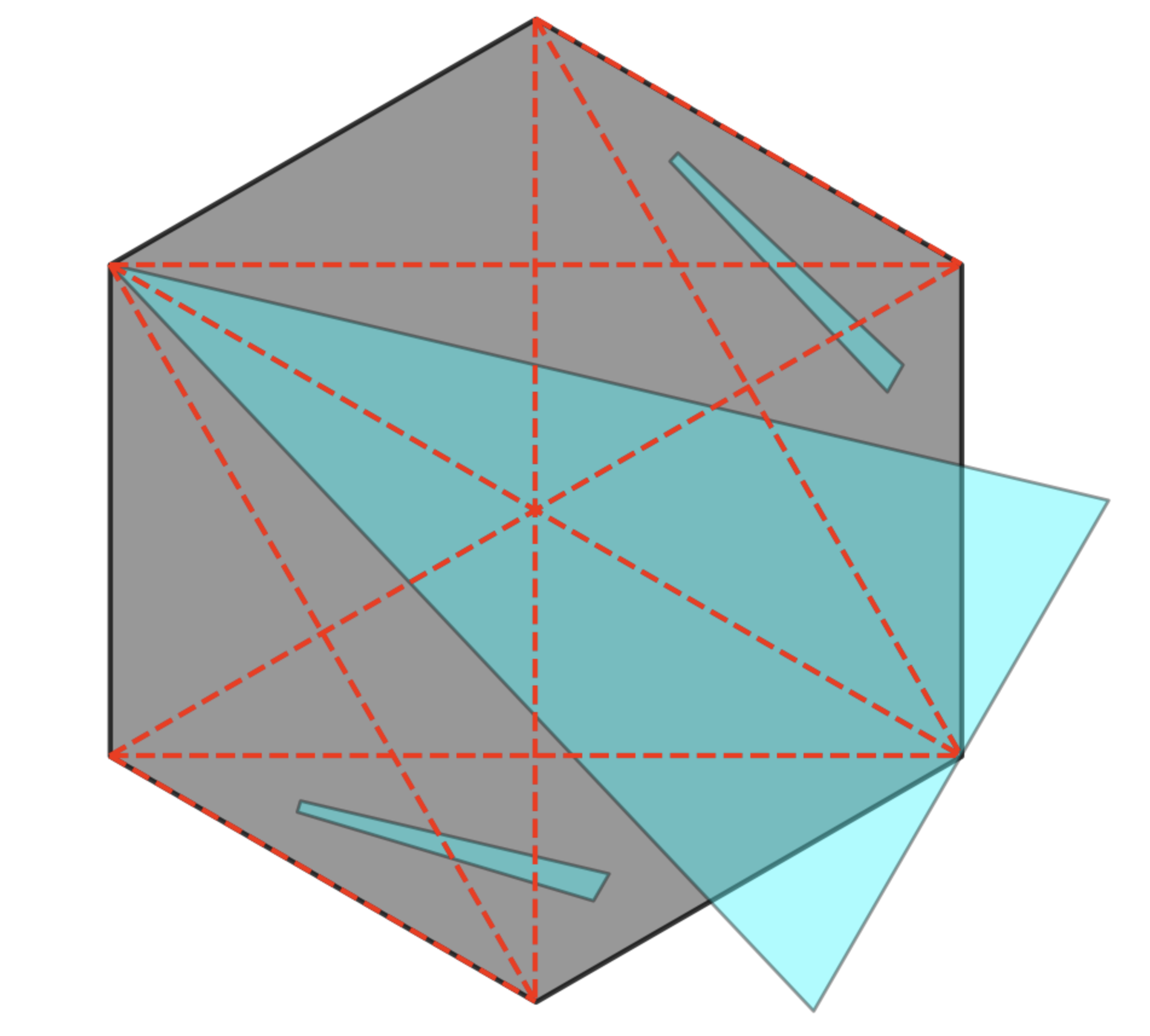}
    }
    \hspace{-0.1in}
    \subfigure[Formation against $2$ states. ]{
    \includegraphics[width=1.45in]{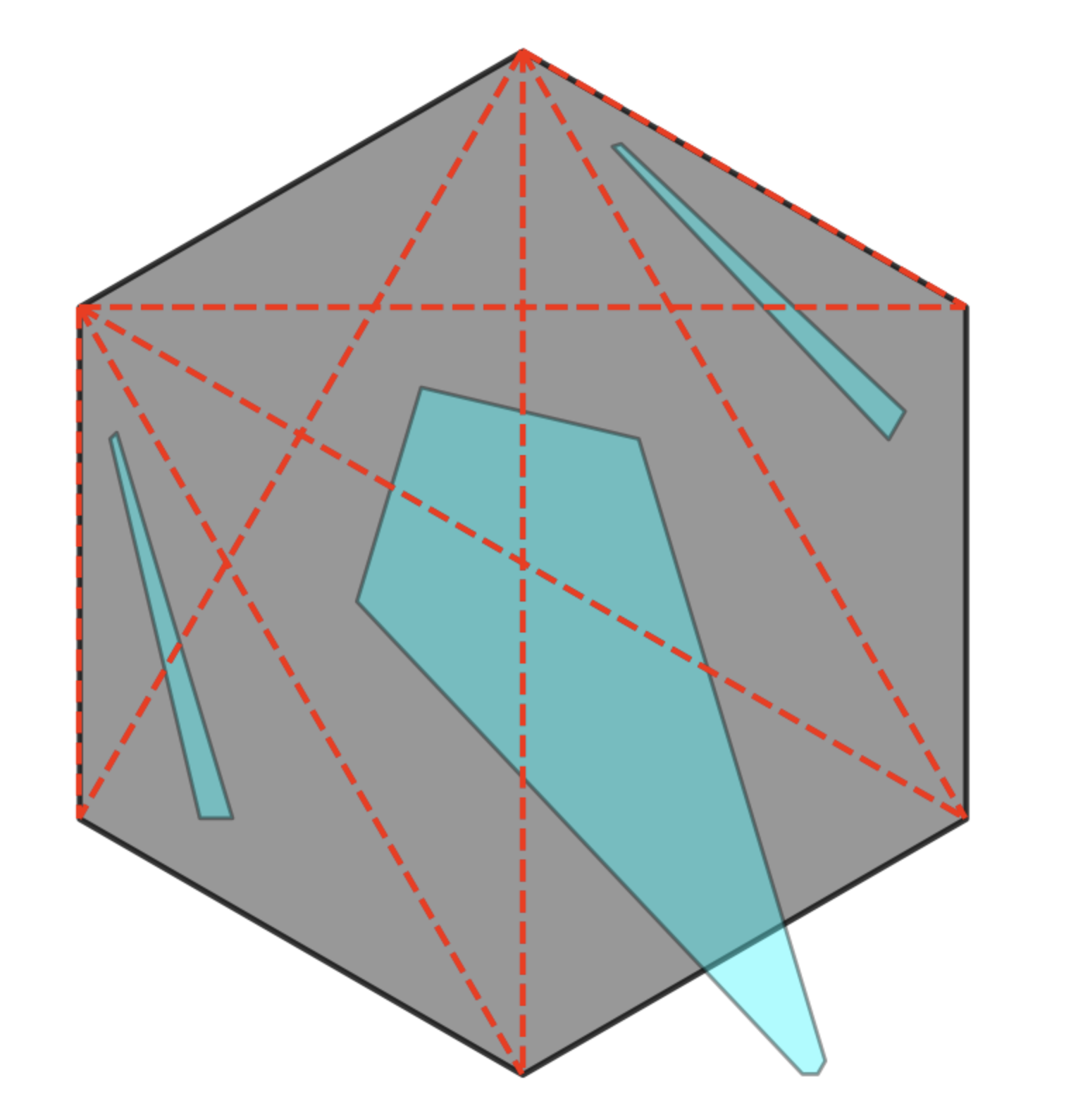}}
    \hspace{-0.1in}
    \subfigure[Formation against $1$ state. ]{
    \includegraphics[width=1.55in]{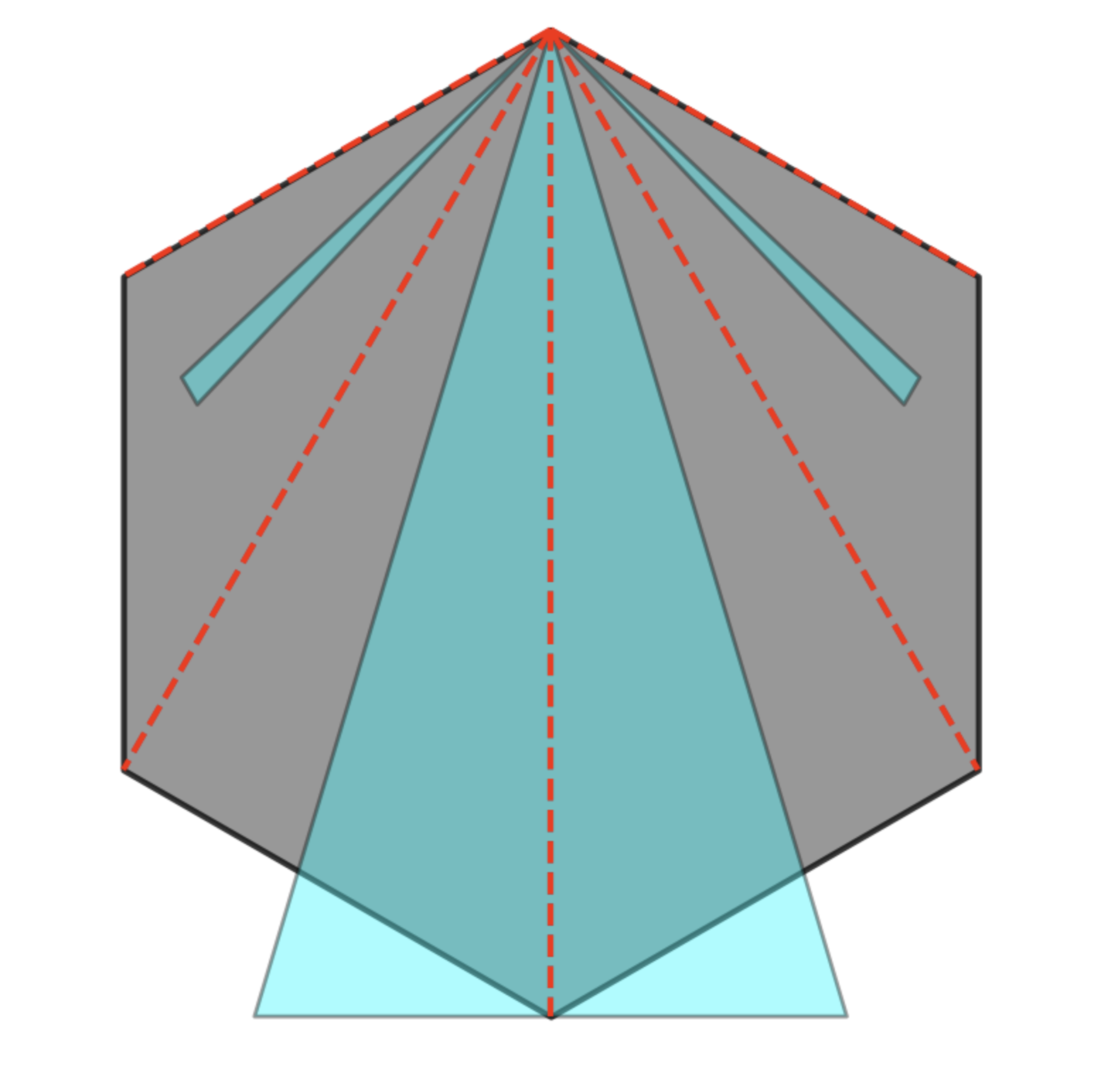}}
    \hspace{-0.1in}
    \subfigure[Formation transitions.]{
    \includegraphics[width=1.55in]{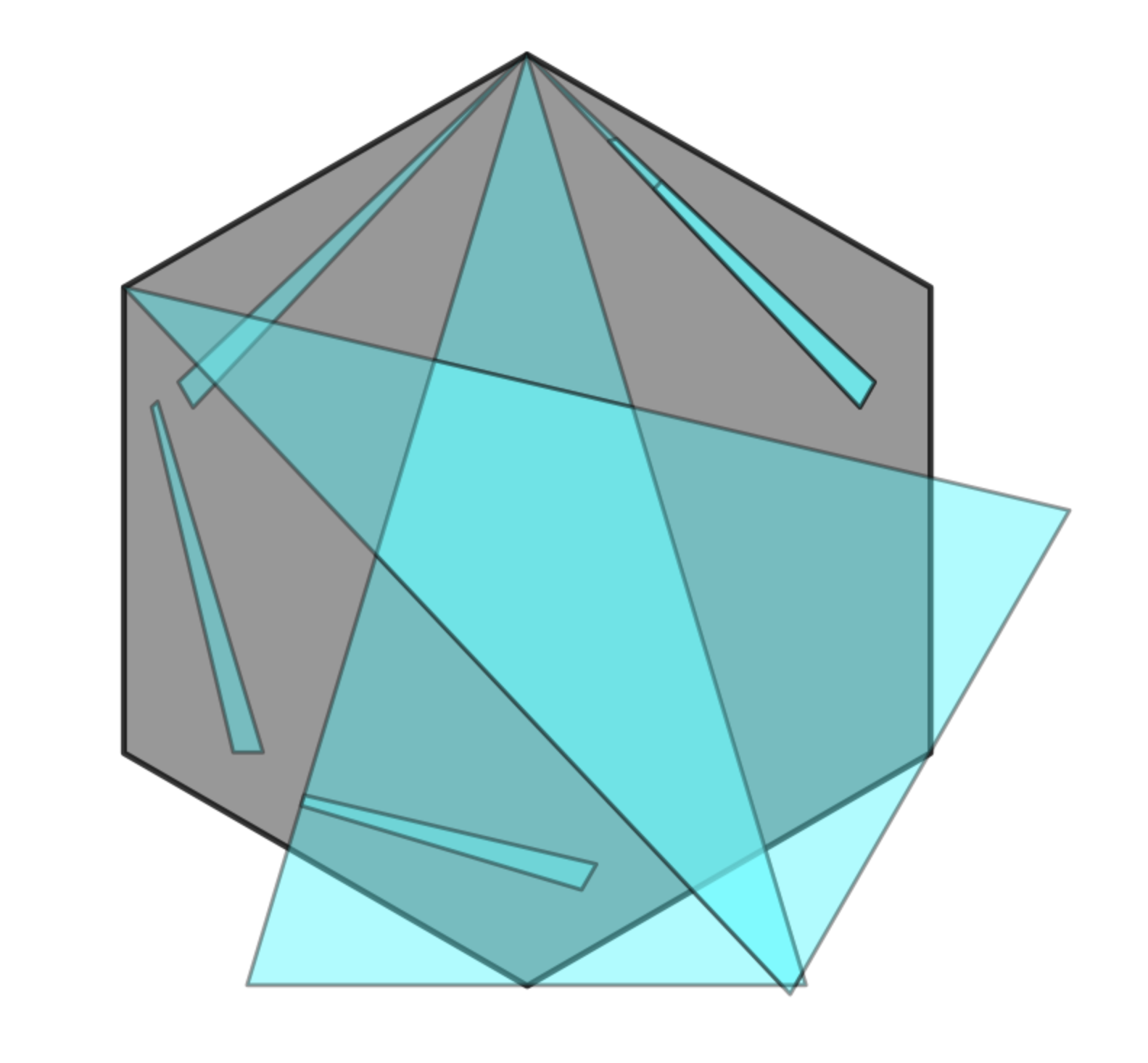}}
    \caption{With six states and three pursuers, different formations and transitions among formations.}
    \label{fig:six-vs-3}
\end{figure*}

\begin{figure*}
    \centering
    \subfigure[Formation against $4$ states. ]{
    \includegraphics[width=1.55in]{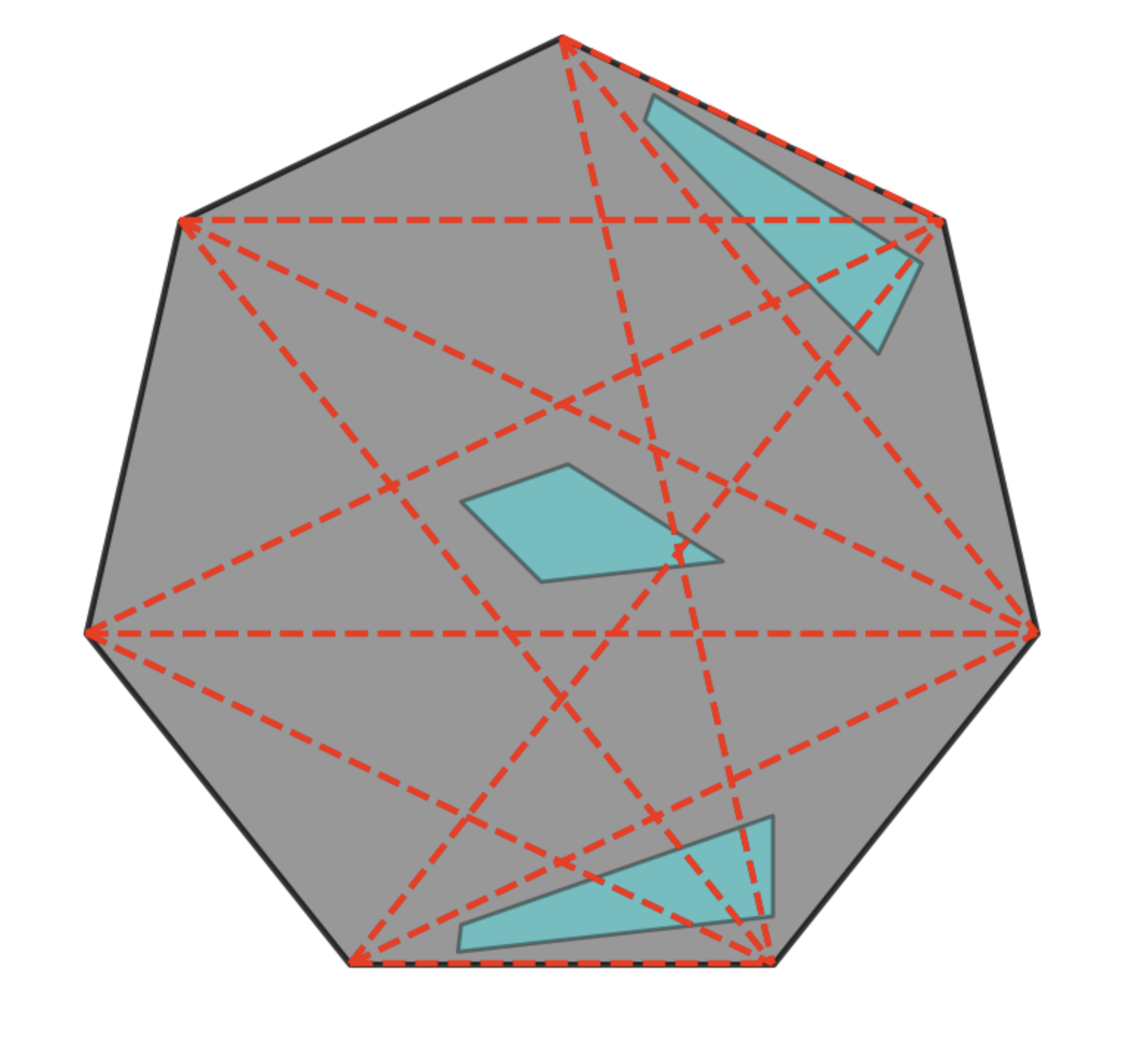}
    }
    \hspace{-0.1in}
    \subfigure[Formation against $3$ states. ]{
    \includegraphics[width=1.55in]{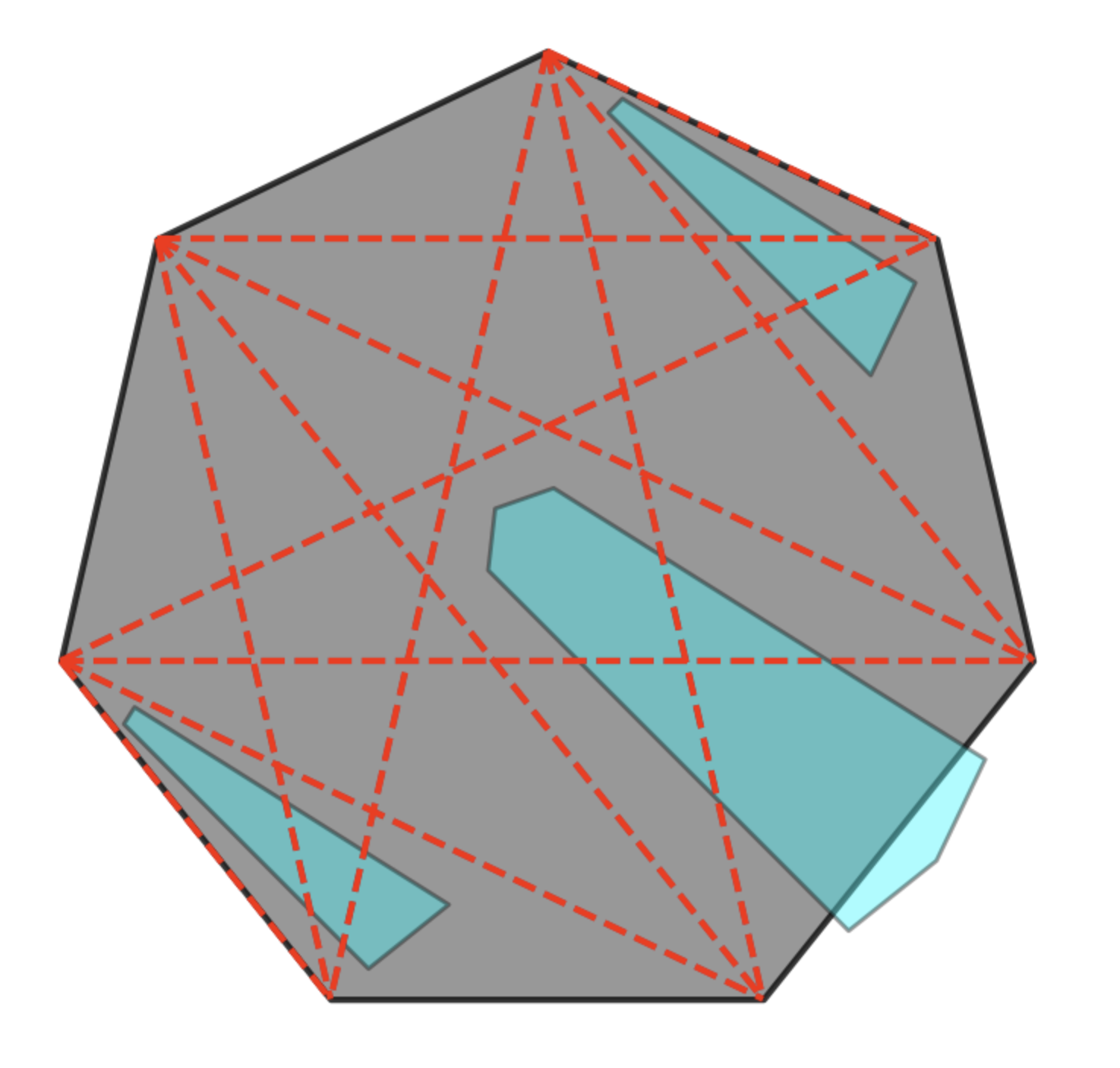}}
    \hspace{-0.1in}
    \subfigure[Formation against $2$ states. ]{
    \includegraphics[width=1.55in]{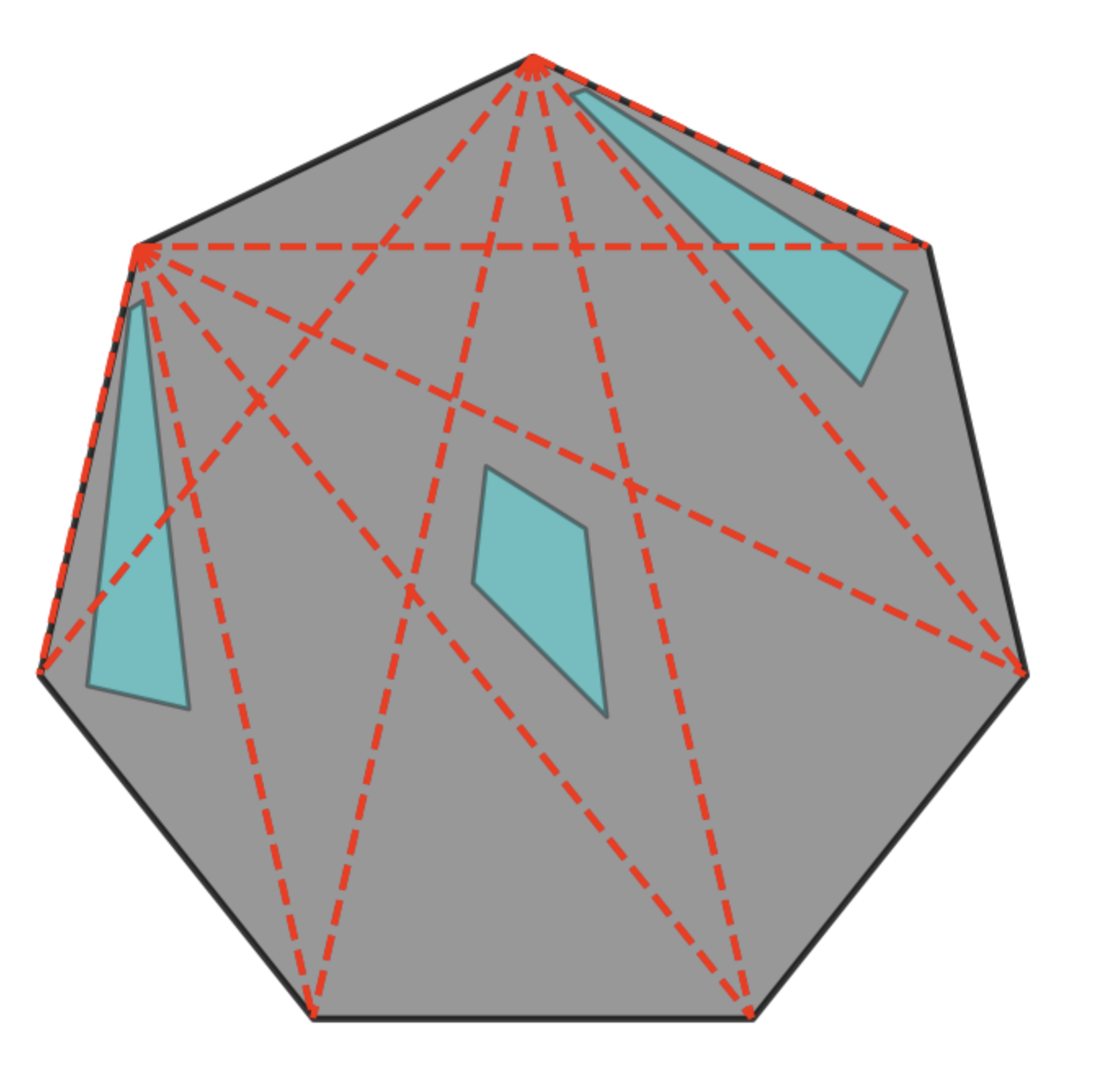}}
    \hspace{-0.1in}
    \subfigure[Formation against $1$ state. ]{
    \includegraphics[width=1.55in]{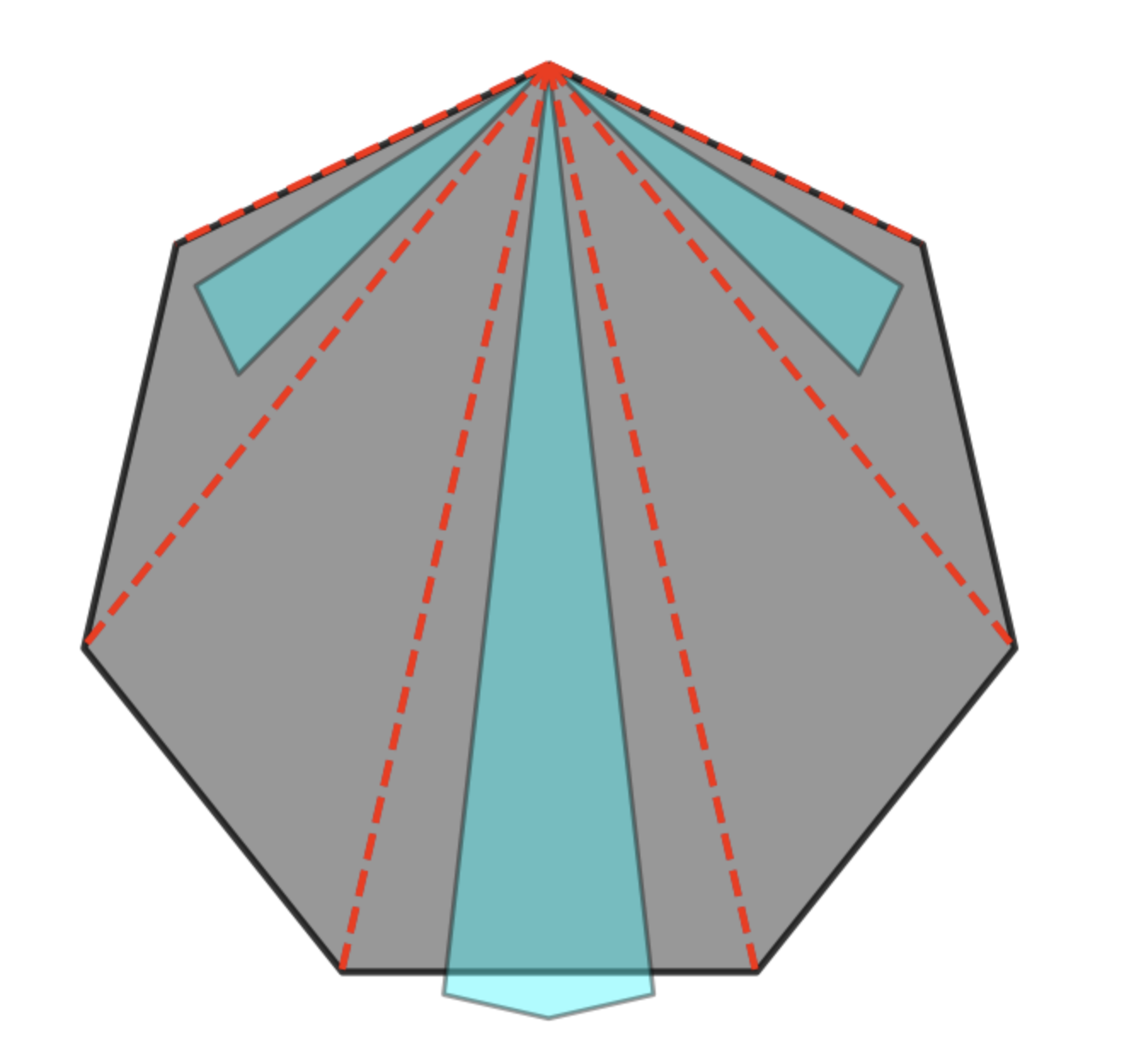}}
    \caption{With seven states and three pursuers, different formations and transitions among formations.}
    \label{fig:seven-vs-3}
\end{figure*}

\begin{figure*}
    \centering
    \subfigure[Formation against $4$ states. ]{
    \includegraphics[width=1.55in]{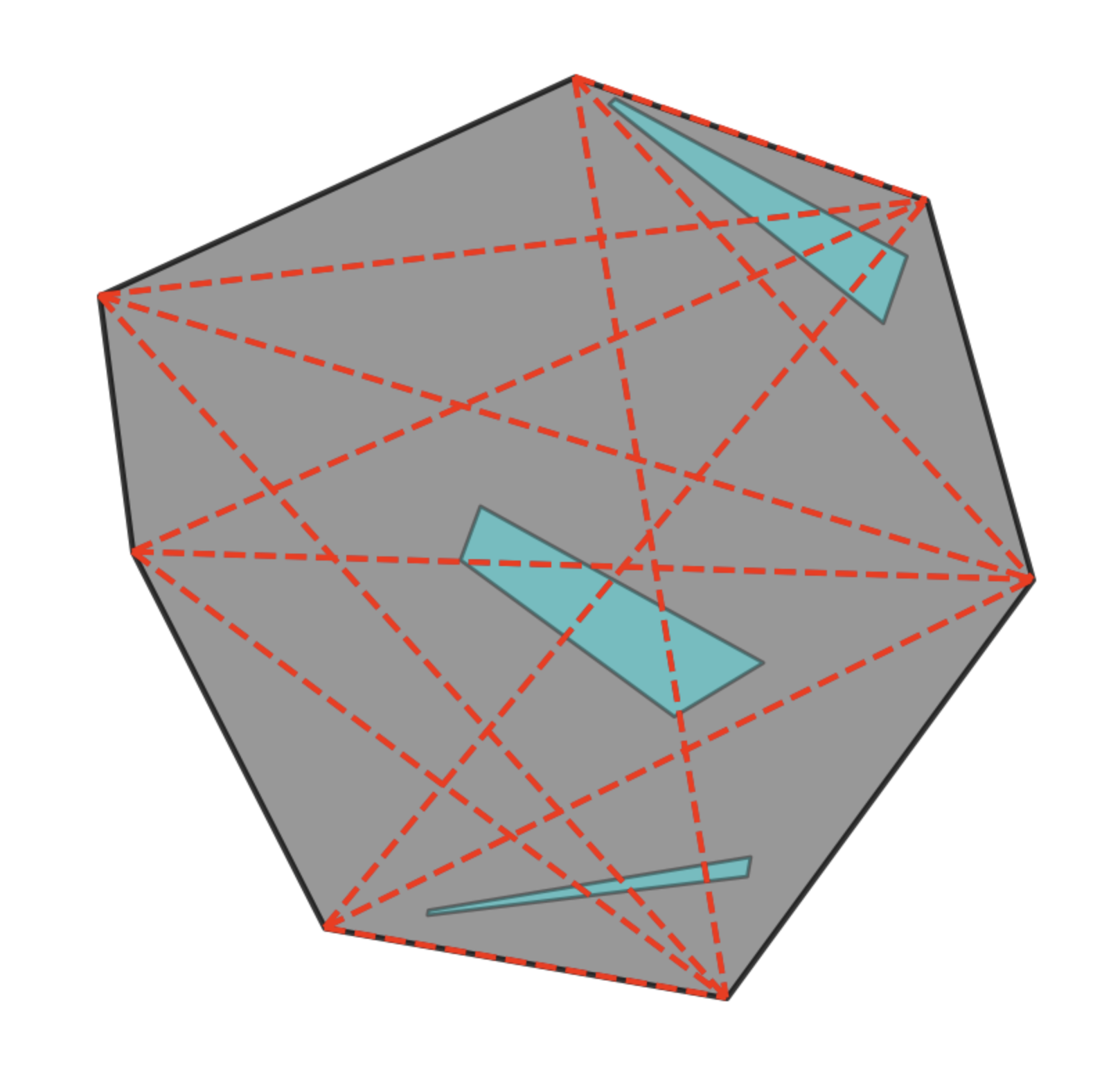}
    }
    \hspace{-0.1in}
    \subfigure[Formation against $3$ states. ]{
    \includegraphics[width=1.55in]{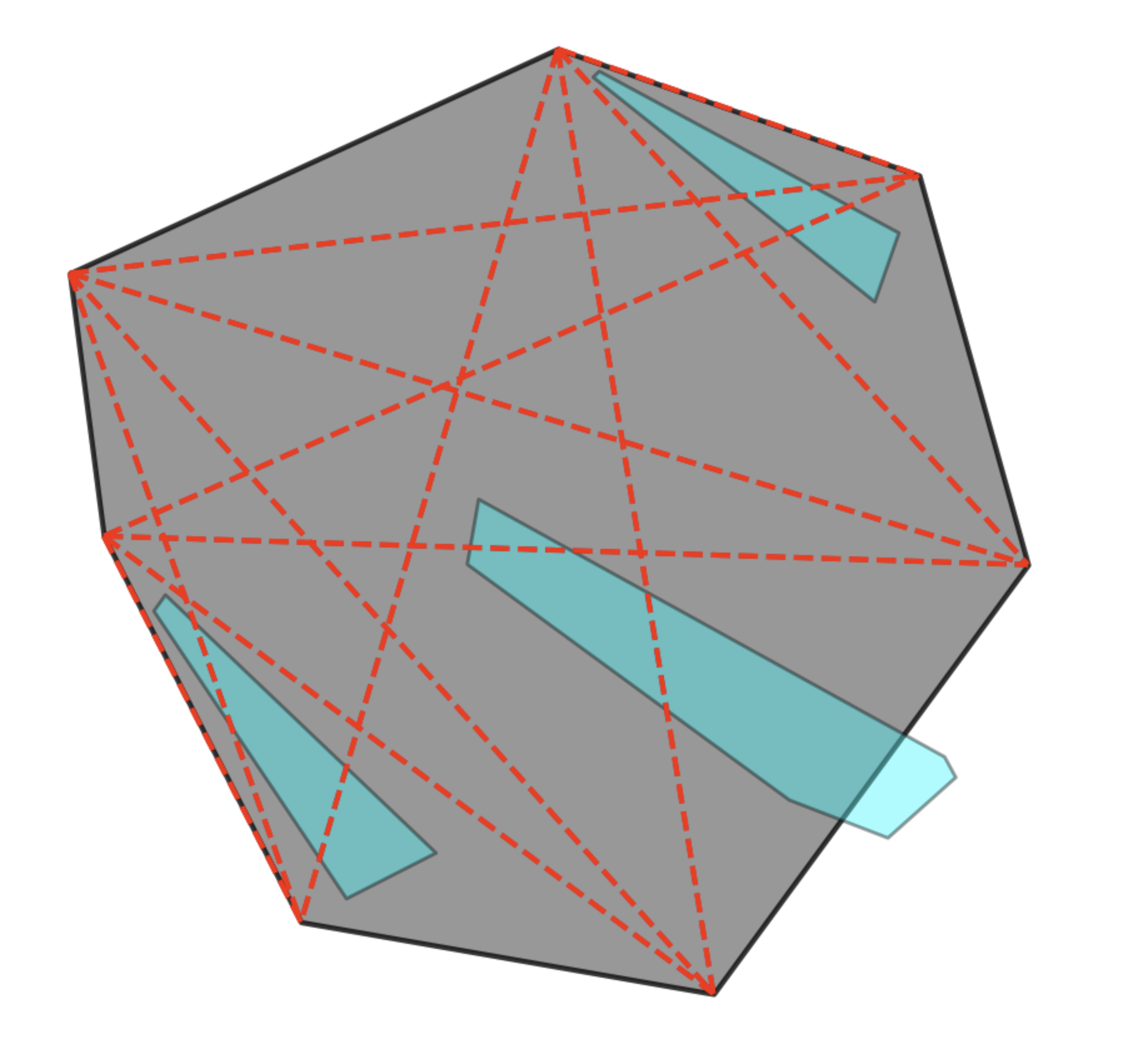}}
    \hspace{-0.1in}
    \subfigure[Formation against $2$ states. ]{
    \includegraphics[width=1.55in]{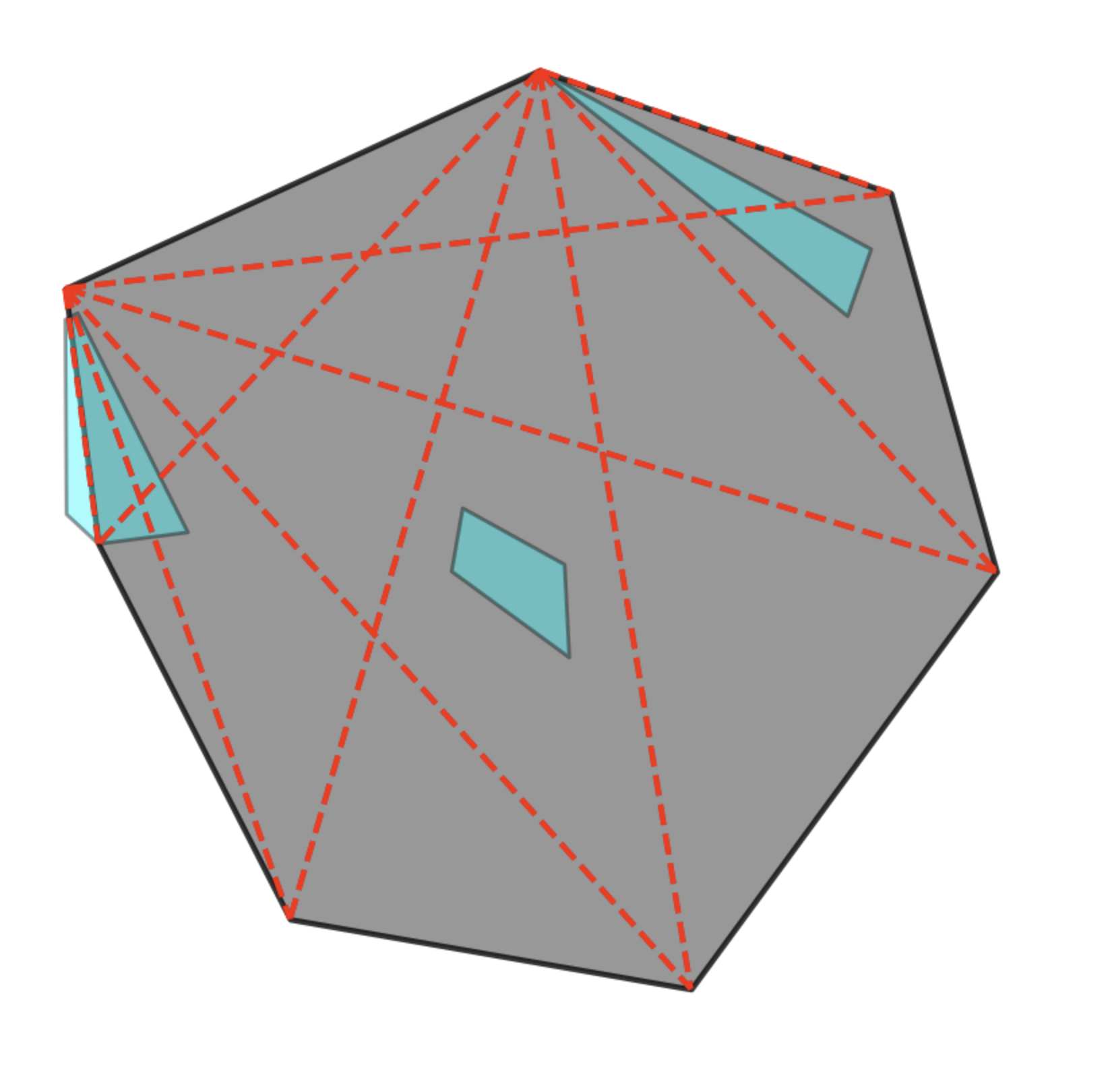}}
    \hspace{-0.1in}
    \subfigure[Formation against $1$ state. ]{
    \includegraphics[width=1.55in]{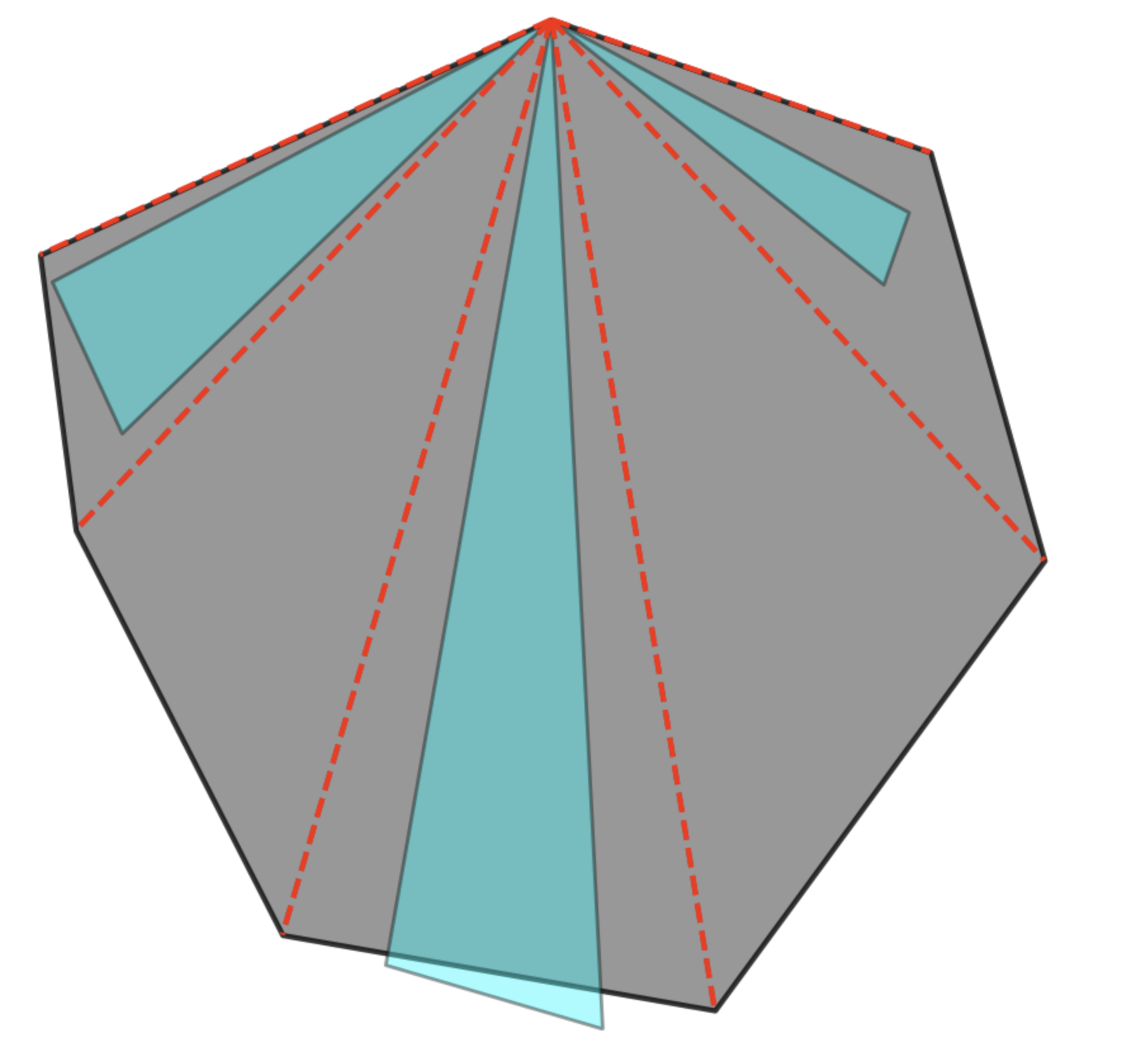}}
    \caption{With seven unevenly placed states and three pursuers, different formations and transitions among formations.}
    \label{fig:seven-nonsym-vs-3}
\end{figure*}

\begin{figure}
\centering
\subfigure[Formation transitions.]{
\includegraphics[width=1.55in]{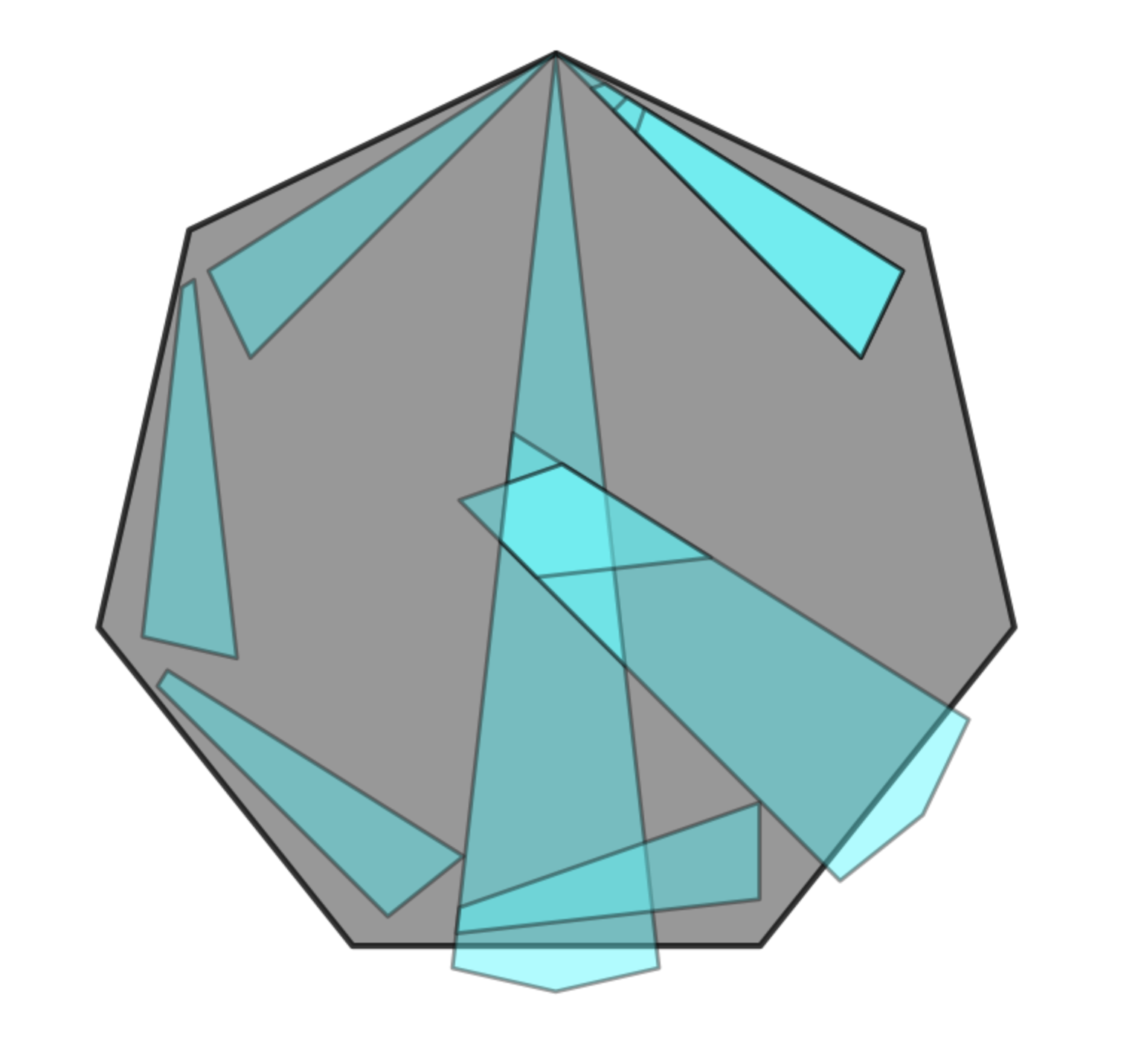}}
\subfigure[Formation transitions.]{
\includegraphics[width=1.55in]{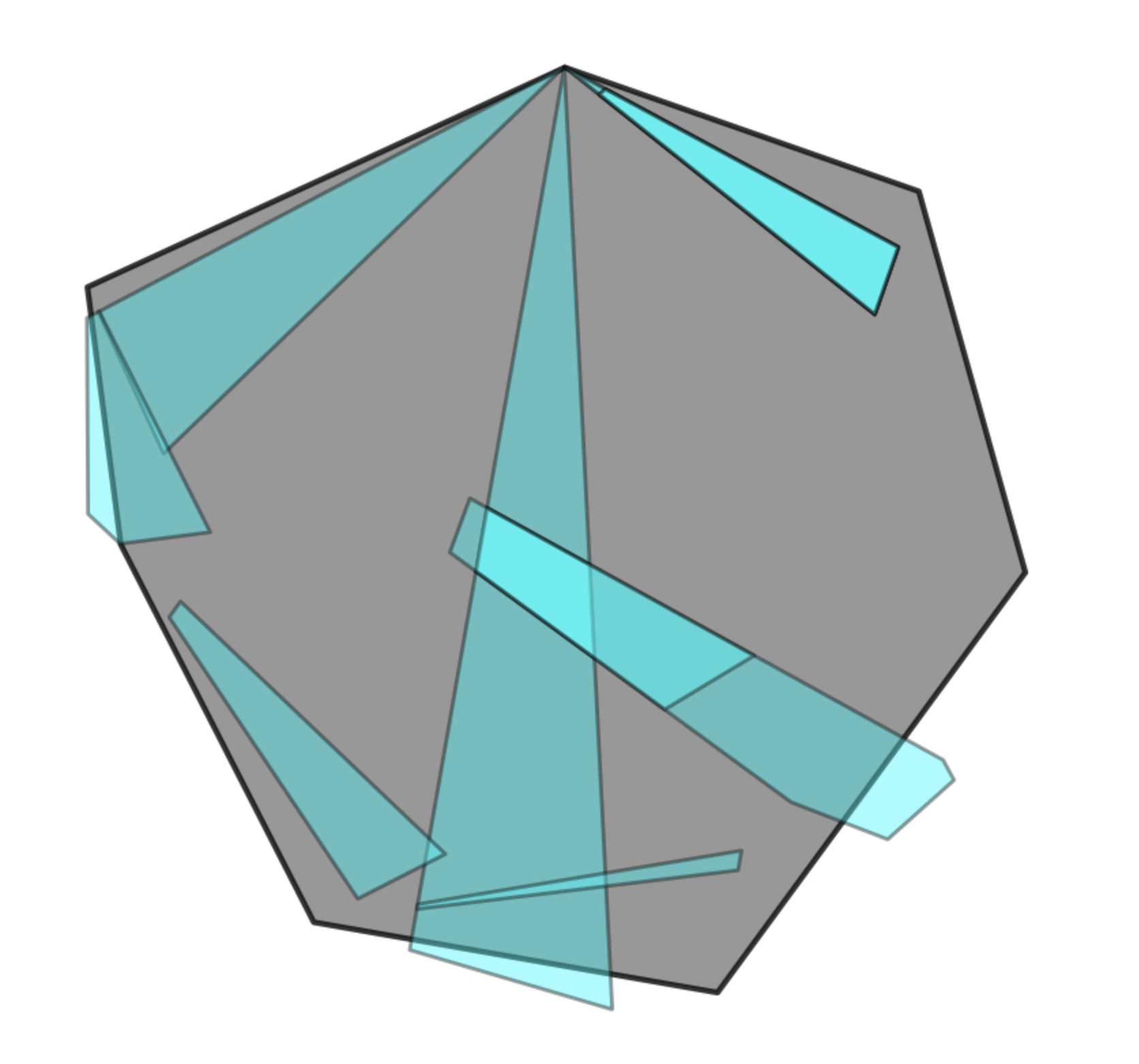}}
\caption{Formation transitions among formations for three pursuers against seven states. }
\label{fig:seven_transitions_merge}
\end{figure}

\begin{lemma}
Given $n$ states and $m$ pursuers in the proposed game where $n$ states are evenly placed~on~a circle, if each pursuer has velocity $v_e\sin(\frac{\pi}{2n})$ and $m$ is just sufficient to form a coverage against a single state, then $m$ pursuers cannot win the game~when~$n\geq 7$. 
\end{lemma}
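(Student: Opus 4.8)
The plan is to prove the contrapositive of the winning criterion: I will show that at the critical velocity $v_p=v_e\sin(\frac{\pi}{2n})$ the pursuers cannot complete even a single formation transition—from blocking a set of $i$ adjacent states to blocking a set of $i-1$—within the time the evader needs for a pass, and therefore, by the criterion underlying Algorithm~\ref{alg:interceptor_win} (a win requires that all transitions down to $i=1$ be feasible), the evader survives.

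First I would pin down the rigidity of formations at the critical velocity. Since the blocking cone of a single lane has half-angle $\arcsin(v_p/v_e)=\frac{\pi}{2n}$, and consecutive lanes out of any state differ in direction by exactly $\frac{\pi}{n}$ (the inscribed-angle computation underlying the earlier Corollary), a single pursuer can cover two adjacent lanes only by sitting on their common bisector, where each lane lies precisely on the boundary of its cone. Thus at criticality there is no transverse slack: the covering assignment is forced, each pursuer covers exactly two consecutive lanes, and its admissible positions collapse onto a single bisector ray. This both explains why $m=\lceil\frac{n-1}{2}\rceil$ is exactly ``just sufficient'' and freezes every formation geometrically up to the radial position of each pursuer along its ray.

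Next I would quantify the transition. Passing from the formation blocking $i$ states to the one blocking $i-1$ states changes which pairs of lanes are covered, so the bisector rays the pursuers must occupy rotate by an amount on the order of the lane spacing $\frac{\pi}{n}$; this forces at least one pursuer to move between two frozen positions by a distance I will lower-bound by a quantity $\delta(n)$ determined purely by the $n$-gon geometry. Against this I set the time budget: the evader can always choose the shortest pass, between adjacent states, of chord length $2\sin(\frac{\pi}{n})$, so the available time is $t_{\min}=2\sin(\frac{\pi}{n})/v_e$ and a pursuer travels at most $v_p\,t_{\min}=2\sin(\frac{\pi}{2n})\sin(\frac{\pi}{n})$. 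The heart of the argument is the inequality $\delta(n) > 2\sin(\frac{\pi}{2n})\sin(\frac{\pi}{n})$, verified by direct trigonometry: the right-hand side decays like $\pi^2/n^2$ while $\delta(n)$ decays only like $1/n$ (a pursuer sits at $\Theta(1)$ distance from its apex and its ray swings by $\Theta(1/n)$), so the inequality holds for all large $n$, and a finite check places the crossover precisely at $n=7$—consistent with the overlapping transitions and the explicit two-pursuer win for $n=5$. A failed transition makes the test in Algorithm~\ref{alg:formation_dis} return false, so no win can be certified; to convert this into an actual evasion guarantee I would dualize it, having the evader oscillate along the shortest unblocked lane so the pursuers never gain the time to shrink the blocked set below two states.

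The step I expect to be the main obstacle is the clean lower bound $\delta(n)$ on the required displacement. The difficulty is that a formation against a set of states is not a single point but a family of admissible configurations—pursuers are free along their bisector rays and reposition in reaction to which state of the set the evader currently occupies—so ``the distance between consecutive formations'' must be read as a minimax over these families, matching the assignment problem solved by Algorithm~\ref{alg:formation_dis}. Showing that even the most favorable admissible pair of configurations is separated by more than the travel budget, rather than merely some pair, is where the real work lies, and it is exactly what distinguishes $n=5,6$ from $n\geq 7$ rather than leaving the threshold to a loose estimate.
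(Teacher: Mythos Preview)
Your plan matches the paper's proof in its core mechanism: both argue that the displacement forced by a single formation transition exceeds the distance a pursuer can travel during the evader's shortest (adjacent-state) pass, namely $l\sin(\frac{\pi}{2n})$ for polygon edge length $l$. Where you diverge is in execution. The paper does not run an asymptotic $\delta(n)\sim 1/n$ versus budget $\sim 1/n^{2}$ comparison followed by a separate finite check; instead it computes the transition distance explicitly by locating the intersection $A$ of two particular bisectors---the bisector at $p_{\lfloor n/2\rfloor}$ of the lanes to $p_{0}$ and $p_{n-1}$, and the bisector at $p_{0}$ of the lanes to $p_{n-1}$ and $p_{n-2}$---and then applying the sine rule in the triangles $p_{0}AB$ and $p_{\lfloor n/2\rfloor}AB$. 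This yields the closed-form pair
\[
\sin\gamma \geq \frac{1}{2\sin(\pi/2n)}-\sin(\pi/2n),\qquad \sin\gamma \leq \sin\Bigl(\frac{\pi}{2}+\frac{3\pi}{2n}\Bigr),
\]
whose incompatibility for $n\geq 7$ is then a one-line verification. Your scaling heuristic is sound for large $n$, but nailing the threshold at exactly $n=7$ via your ``finite check'' would in practice reproduce this same triangle computation, so the paper's route is the more direct one. On the point you flag as the main obstacle---the minimax over the radial freedom along each bisector ray---the paper's choice of the intersection point $A$ is exactly what collapses it: at the critical velocity a pursuer that is admissible for both neighboring formations must sit on both bisectors simultaneously, so $A$ is forced and the one degree of freedom you were worried about disappears.
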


\begin{proof}
The existence of a winning strategy is determined by whether a transition can happen, given the shortest traveling time among the blocked subset of states. So, let us assume the polygon has edge length $l$. Then, the transition distance $l_t$ needs to be no larger than $l\frac{v_c}{v_p} = l\sin(\frac{\pi}{2n})$. 

\begin{figure}[h]
\centering
\includegraphics[width=3.5in]{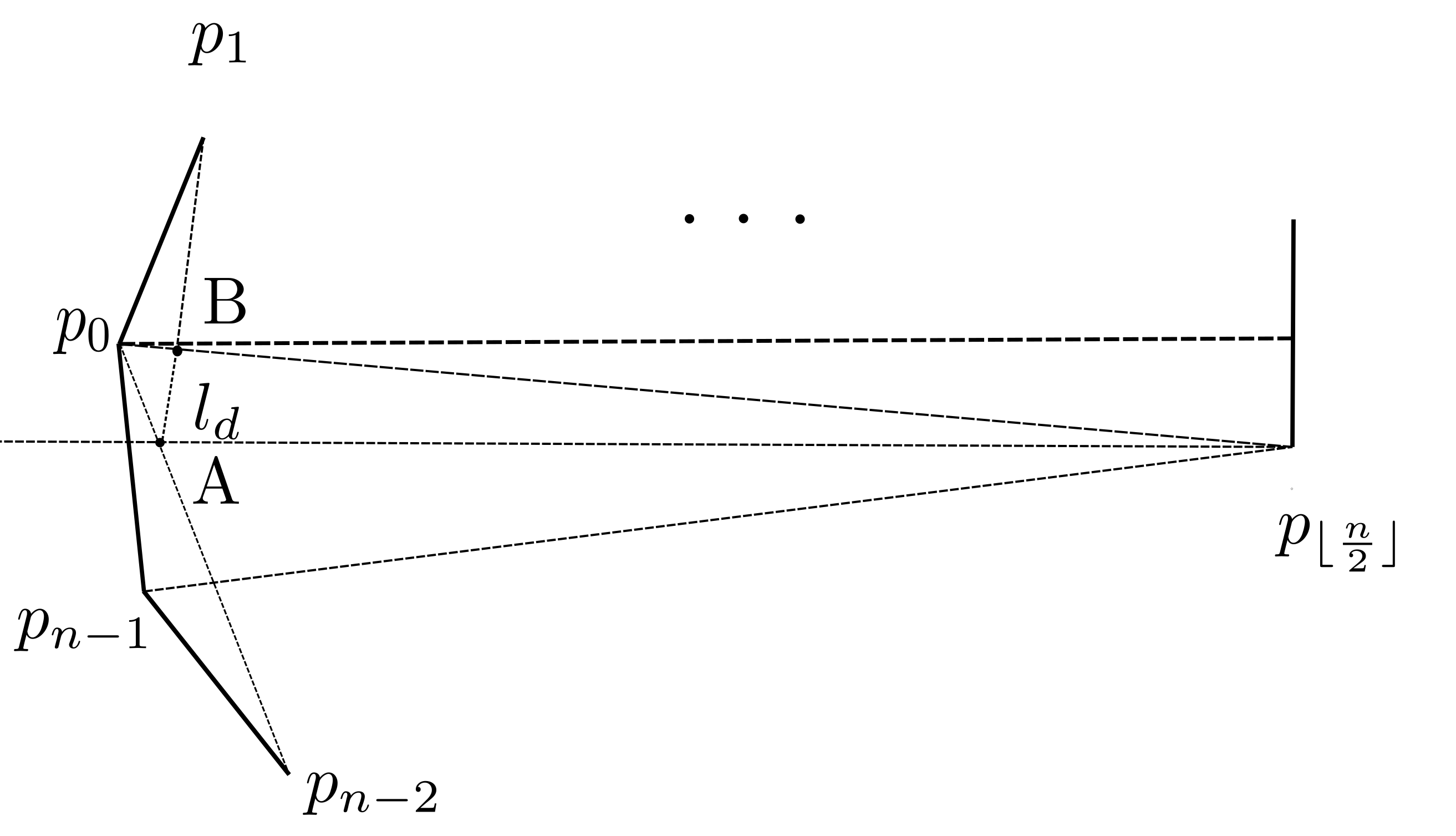}
\caption{The formation transition distances. }
\label{fig:triangles}
\end{figure}

At the same time, let us consider $p_0$, then $l_t$ is determined by the bisecting line from $\lfloor \frac{n}{2}\rfloor$ to $p_0$ and $p_{n-1}$, and the bisecting line from $p_0$ to $p_{n-1}$ and $p_{n-2}$. The intersection point of these two lines give the nearest point from $\lceil \frac{n}{2}\rceil$ formation to the $\lfloor\frac{n}{2}\rfloor$ formation. We need to have $l_t\leq l\frac{v_{c}}{v_{p}}$, which at limit is $l\sin(\frac{\pi}{2n})$. Then, we can apply sine theorem in triangles $p_0AB$ and $p_{\lfloor\frac{n}{2}\rfloor}AB$ shown in Figure~\ref{fig:triangles}, denote $\angle p_{\lfloor\frac{n}{2}\rfloor}p_0A = \alpha$, $
\angle p_0AB = \beta$, and $\angle BAp_{\lfloor\frac{n}{2}\rfloor} = \gamma$. We have $\alpha = \frac{\pi}{2} - \frac{2\pi}{n}$, and $\beta + \gamma = \frac{\pi}{2} + \frac{3\pi}{2n}$. 
Then the satisfying condition is: 
\begin{eqnarray}
\sin\gamma \geq \frac{1}{2\sin(\frac{\pi}{2n})} - \sin(\frac{\pi}{2n})\\
\sin\gamma \leq \sin(\frac{\pi}{2} + \frac{3\pi}{2n})
\end{eqnarray}
When $n \geq 7$, the above conditions cannot be met. If the pursuer velocity is only $v_e\sin(\frac{\pi}{2n})$, the evader will win. 
\end{proof}

From the above lemma, we found that the necessary velocity to block against a single state decreases when the number of states increases. However, if the pursuers only maintain the necessary velocity to block a single state, the evader has strategies to win. Given the fixed evader velocity, the number of state increases, the number of needed pursuers needs to increase to form successful coverage against the states. At the same time, the minimum velocity for the pursuers to win does not decrease as much. It is possible to find the pursuer velocity limit, but the limit involves complex trigonometry computations. Borrowing Figure~\ref{fig:triangles}, let us denote $\alpha = \angle p_{\lfloor\frac{n}{2}\rfloor}p_0A$, and $\gamma = \angle BAp_{\lfloor\frac{n}{2}\rfloor}$, we have 
\begin{eqnarray}
\alpha = \frac{\pi}{2} - \frac{2\pi}{n},\ \gamma = \frac{\pi}{n} - \mathrm{asin}\frac{v_c}{v_p} \geq 0\\
c = \frac{l}{2\sin(\frac{\pi}{2n})}, \ \frac{c\cdot v_p}{l\cdot v_c} \leq \frac{1}{\tan\alpha} + \frac{1}{\tan\gamma}
\end{eqnarray}
For $n = 7$, we have $\frac{v_p}{v_e}\geq 0.323$ where minimum velocity for blocking against a single passer $\lim\frac{v_p}{v_e} = \sin{\frac{\pi}{2n}}\approx 0.223$, for $n = 9$, we have $\frac{v_p}{v_e}\geq 0.262$ where $\lim\frac{v_p}{v_e} \approx 0.174$, and for $n = 11$ we have $\frac{v_p}{v_e}\geq 0.223$ where $\lim\frac{v_p}{v_e} \approx 0.142$, etc. We can see that the minimum velocity drops much slower compared to the velocity limit for blocking against a single state. 

\section{Implications and Extensions}

In the proposed game, the placement (locations) of the pursuers are critical to whether they can win the game. Perhaps counter-intuitive, the best strategy usually is {\bf NOT} to move towards the current passer holding the ball but towards locations to force the evader to move to some state against which the pursuers can block all the potential moving lanes. The proposed method is correct even when the states do not form a convex shape. The proposed geometric procedure only requires the moving lanes to be straight lines. The proposed procedure can also be extended to 3D, where the intersecting geometry becomes cones. 

We can also visualize the game a bit differently: let the pursuers start at some locations and extrude the polygon into 3D. We can draw a cone for each pursuer whose apex is the current location of the corresponding pursuer. Each round, we grow a new cone within the last cone, and when viewed from 3D, the chain of cones looks very similar to the back-chaining approach proposed in~\citet{lozano1984automatic}, and LQR trees proposed in~\citet{tedrake2009lqr}. The main difference to the back-chaining approach is that this game's outcome can have different forms.

In the proposed problem, though the evader has a higher velocity than the pursuers, a fixed set of states it can visit exists. Because the trajectories are linear, it is easy to use simple geometrical analysis to find the boundary condition for winning against the evaders. When the trajectories for the evaders are not straight lines, or when the states $S$ are not stationary, the problem becomes more interesting. 

\textbf{Keep-away game on a graph}: When the game is played on a graph rather than in Euclidean space, we need to find vertices within different distances along the moving paths to find pursuer placements. The number of moving lanes grows on the graph. Again, we can find overlapping vertices among disjoint moving paths and check if formations are feasible and whether the formations can be transformed on the graph. As the moving lanes no longer obey the geometric adjacency relations, the number of pursuers needed to win could increase dramatically. 

\textbf{Non-straight passing lanes}: When the evader can travel along curves, the interception regions change. The proposed algorithm cannot be directly used to compute the winning condition. On the other hand, as the evader eventually has to reach a state, one can place circles of different radii at all states, check for overlapping regions among the circles, and check for transitions among formations. However, the circle-based intersection is only sufficient for the pursuers to win and can be far from optimal. 

\textbf{Moving states}: When the evader travels along straight lines, but the states $s\in S$ can move, the problem becomes more challenging. In this case, the moving lane is no longer just a set of isolated lines but a set of cones that may or may not overlap. The number of pursuers needed to win against the evader increases and may reach $m = n-1$ if the states can move with the same velocities as the pursuers. Again, we can use a procedure similar to the proposed ones to check for winning conditions. In such cases, the angle of the coverage region decreases. The decreased amount is determined by the states' moving velocity, the pursuer velocity, and the evader velocity. 

\textbf{Pursuers with acceleration bound}: When the pursuers can no longer move instantaneously towards an arbitrary direction and have acceleration bounds, the proposed problem becomes even more interesting and more complex simultaneously. The game with acceleration bound is the primary motivation for studying the proposed problem. If we directly study the problem using differential geometry, the game would be very similar to a differential game and complex to solve. We are interested in whether the solution of the proposed game without acceleration bound can be used to solve the case when the acceleration bounds exist. The acceleration bound reduces the coverage region, but the changes can be upper and lower bounded based on the acceleration. Although the boundary condition cannot be directly solved or even described analytically, we believe the solution~to the discrete problem can be constructive to derive the boundary condition and the control synthesis for the pursuers.

\textbf{Pursuit evasion with exits}: In the proposed game, the evaders follow a set of specific paths. What if the winning condition for the evaders is to reach specific locations (exits)? Again, the pursuers must make different formations, but this time, against the exits. We can extend such a game to a non-zero-sum game, where the evaders attempt to make sure at least $k$ of them would escape while the pursuers attempt to catch as many evaders as possible. Is the proposed game related to this pursuit-evasion game with exits? What if the winning condition is that the evader(s) must visit a subset of exits in an arbitrary sequence? We will investigate such variations in future work. 

\section{Conclusion}\label{sec:conclusion}
This work introduces a variation of the pursuit-evasion game as an extension of the keep-away game usually practiced in football. The game shares many interesting attributes with classic lion and man games and homicidal chauffeur games. We show the limit of different winning conditions for the pursuers and evaders. We also show some interesting cases with a small number of visiting states and pursuers. The game, even when simplified, is not trivial. The strategy of the evader and pursuers builds upon solutions of complex optimization but can be analyzed using geometry. We present a solution to the proposed game with no acceleration bound for the pursuers. The approach is built upon computational geometry, and we have presented several example solutions to the game with a different number of states. 

One additional potential future works is to study when the pursuers have acceleration bounds and find solutions by interweaving discrete search and continuous optimization.


\section*{Acknowledgment}

The authors would like to thank some helpful discussions with Professor Chenxi Wu from the Department of Mathematics at the University of Wisconsin-Madison, who was a consulting researcher in the Cognitive Computing Lab at Baidu Research.

\bibliographystyle{plainnat}
\bibliography{refs}

\end{document}